  \providecommand\BibTeX{{%
    \normalfont B\kern-0.5em{\scshape i\kern-0.25em b}\kern-0.8em\TeX}}}
\newtheorem{remark}{Remark}
\DeclarePairedDelimiter{\ceil}{\lceil}{\rceil}
\begin{document}

\title{Private and Byzantine-Proof Cooperative Decision-Making}  % put your title here!
%\titlenote{Produces the permission block, and copyright information}

% AAMAS: as appropriate, uncomment one subtitle line; check the CFP
%\subtitle{Extended Abstract}
%\subtitle{Blue Sky Ideas Track}
%\subtitle{JAAMAS Track}
%\subtitle{Demonstration}
%\subtitle{Doctoral Consortium}

% AAMAS: submissions are anonymous for most tracks
% \author{Paper \#1813}  % put your paper number here!

%% example of author block for camera ready version of accepted papers: don't use for anonymous submissions
%
\author{Abhimanyu Dubey}
\affiliation{%
 \institution{Massachusetts Institute of Technology}
}
\email{dubeya@mit.edu}

\author{Alex Pentland}
\affiliation{%
 \institution{Massachusetts Institute of Technology}
}
\email{pentland@mit.edu}
%
%\author{Lars Th{\o}rv{\"a}ld}
%\authornote{This author is the
%  one who did all the really hard work.}
%\affiliation{%
%  \institution{The Th{\o}rv{\"a}ld Group}
%  \streetaddress{1 Th{\o}rv{\"a}ld Circle}
%  \city{Hekla} 
%  \country{Iceland}}
%\email{larst@affiliation.org}
%
%\author{Valerie B\'eranger}
%\affiliation{%
%  \institution{Inria Paris-Rocquencourt}
%  \city{Rocquencourt}
%  \country{France}
%}
%\author{Aparna Patel} 
%\affiliation{%
% \institution{Rajiv Gandhi University}
% \streetaddress{Rono-Hills}
% \city{Doimukh} 
% \state{Arunachal Pradesh}
% \country{India}}
%\author{Huifen Chan}
%\affiliation{%
%  \institution{Tsinghua University}
%  \streetaddress{30 Shuangqing Rd}
%  \city{Haidian Qu} 
%  \state{Beijing Shi}
%  \country{China}
%}
%
%\author{Charles Palmer}
%\affiliation{%
%  \institution{Palmer Research Laboratories}
%  \streetaddress{8600 Datapoint Drive}
%  \city{San Antonio}
%  \state{Texas} 
%  \postcode{78229}}
%\email{cpalmer@prl.com}
%
%\author{John Smith}
%\affiliation{\institution{The Th{\o}rv{\"a}ld Group}}
%\email{jsmith@affiliation.org}
%
%\author{Julius P.~Kumquat}
%\affiliation{\institution{The Kumquat Consortium}}
%\email{jpkumquat@consortium.net}
%
%% The example's default list of authors is too long for headers
%\renewcommand{\shortauthors}{B. Trovato et al.}

\begin{abstract}  % put your abstract here!
The cooperative bandit problem is a multi-agent decision problem involving a group of agents that interact simultaneously with a multi-armed bandit, while communicating over a network with delays. The central idea in this problem is to design algorithms that can efficiently leverage communication to obtain improvements over acting in isolation. In this paper, we investigate the stochastic bandit problem under two settings - (a) when the agents wish to make their communication private with respect to the action sequence, and (b) when the agents can be byzantine, i.e., they provide (stochastically) incorrect information. For both these problem settings, we provide upper-confidence bound algorithms that obtain optimal regret while being (a) differentially-private and (b) tolerant to byzantine agents. Our decentralized algorithms require no information about the network of connectivity between agents, making them scalable to large dynamic systems. We test our algorithms on a competitive benchmark of random graphs and demonstrate their superior performance with respect to existing robust algorithms. We hope that our work serves as an important step towards creating distributed decision-making systems that maintain privacy.
\end{abstract}

\keywords{multi-armed bandits; sequential decision-making; differential privacy; robustness}  % put your semicolon-separated keywords here!

\maketitle

%%%%%%%%%%%%%%%%%%%%%%%%%%%%%%%%%%%%%%%%%%%%%%%%%%%%%%%%%%%%%%%%%%%%%%%%%%%%%%%%%%%%%%%%%%%%%%%%%%%%%%%%%
%% start of main body of paper

\section{Introduction}
Cooperative decision-making in multiagent systems has been a prominent area of scientific inquiry, with interest from a variety of disciplines from robotics and control systems to macroeconomics and game theory. Applications of cooperative decision-making range from robotics~\citep{cao1997cooperative, kok2003multi} to sensor networks~\citep{byers2000utility,klein2004sensor}.

However, as the availability of large-scale dense data (often collected through a network of sources) increases, the problem of cooperative learning among multiple agents becomes increasingly relevant, moving beyond systems that are ``effectively'' single-agent to much larger, real-time systems that are decentralized and have agents operating independently. Examples of such applications include seamless personalization across multiple devices in IoT networks~\citep{kisch2017real, roman2013features, wang2015processing}, that constantly share data between agents.

In such a framework, it is imperative to ensure that information is securely and robustly shared between agents, and individual concerns around privacy are not violated in the pursuit of performance on inference problems. Most generally, the learning setting assumed in such problems and applications is sequential decision-making, where observations are provided in a sequence, and each agent takes actions with the knowledge of this stream of information.

The multi-armed bandit~\citep{thompson1933likelihood} is a classic framework to investigate sequential decision-making in. It provides an elegant mathematical framework to make precise statements about the \textit{exploration-exploitation} dilemma that is central to sequential decision-making, and has thus become the environment of choice across a wide variety of application domains including, most prominently, online advertising~\citep{gentile2014online}. An extension is the \textit{cooperative} multiagent bandit, where a group of agents collectively interact with the same decision problem, and must \textit{cooperate} to obtain optimal performance.

However, in a decentralized environment, cooperative decision-making comes with several caveats. Since it is impossible to control any individual agents' behavior from a centralized server, enforcing important constraints such as private computaiton with respect to observations becomes a difficult technical challenge. Additionally, if any agent is malicious (i.e. provides incorrect or adversarial communication), achieving optimal performance becomes non-trivial.

In this paper, we consider two problems in the context of cooperative multiagent decision-making. In the multi-armed (context-free) bandit environment, we investigate (a) privacy in inter-agent communication and (b) tolerance to \textit{byzantine} or malicious agents that deliberately provide stochastically corrupt communication. Our contributions can be listed as follows.

1. We provide a multi-agent UCB-like algorithm \textsc{Private Multiagent UCB} for the cooperative bandit problem, that guarantees communication of the reward sequence of any agent to be private to any other receiving agent, and provides optimal group regret. Our algorithm is completely decentralized, i.e., its performance or operation does not require any knowledge of the agents' communication network, and each agent chooses actions autonomously (i.e., without a central server). It also maintains privacy of the source agents' messages regardless of the individual behavior of any receiving agent (i.e., it is robust to defection in individual behavior).

2. We provide a multi-agent UCB-like algorithm, dubbed \textsc{Byzantine-Proof Multiagent UCB} for the cooperative bandit problem with \textit{byzantine} agents, that corrupt their messages following Huber's $\epsilon$-contamination model, and provide optimal (in terms of communication overhead) group regret bounds for its performance. Like our private algorithm, this too is completely decentralized, and operates without any knowledge of the communication network.

3. We validate the theoretical bounds on the group regret of our algorithm on a family of random graphs under a variety of different parameter settings, and demonstrate their superiority over existing single-agent algorithms and robustness to parameter changes.

The paper is organized as follows. First, we provide an overview of the background and preliminaries essential to our problem setup. Next, we examine the private setting, followed by the byzantine setting. We then provide results on our experimental setup and survey related work in this problem domain before closing remarks.
\section{Background and Preliminaries}
\textbf{The Cooperative Stochastic Bandit}. The multi-armed stochastic bandit problem is an online learning problem that proceeds in rounds. At every round $t \in \{1, 2, ..., T\}$ (denoted in shorthand as $[T]$), the learner selects an arm $A_t \in [K]$ and the bandit draws a random reward $X_t$ from a corresponding reward distribution $\nu_{A_t}$ with mean $\mu_{A_t}$. We assume the rewards are drawn from probability densities with bounded support in $[0, 1]$\footnote{This can be extended to any bounded support by renormalization, and we choose this setup for notational convenience.}, a typical assumption in the bandit literature. The objective is to minimize the \textit{Regret} R(T).
\begin{equation}
    R(T) = T\cdot\max_{k \in [K]}\mu_k - \sum_{t \in [T]}\mu_{A_t}.
\end{equation}
The cooperative multi-armed bandit problem is a distributed extension of the stochastic bandit problem. In this setting, a group of $M$ agents that communicate over a (connected, undirected) network $\mathcal G = (V, E)$ face an identical stochastic $K$-armed bandit, and must cooperate to collectively minimize their group regret $R_{\mathcal G}(T)$.
\begin{equation}
    R_{\mathcal G}(T) = TM\max_{k \in [K]}\mu_k - \sum_{m \in [M]}\sum_{t \in [T]}\mu_{A_{m,t}}.
\end{equation}
Here, we denote the action taken by learner $m$ at time $T$ as $A_{m, t}$ and the corresponding reward received as $X_{m, t}$, $d(m, m')$ denotes the length of the shortest path between agents $m$ and $m'$. For any $T$, $n_k(T)$ denotes the total number of times any arm $k$ is pulled across all agents, and $n^m_k(T)$ denotes the times agent $m$ has pulled the arm. In the cooperative bandit setting, we denote the policy of each agent $m \in [M]$ by $(\pi_{m, t})_{t \in [T]}$. The collective policy for all agents is denoted by $\Pi = (\pi_{m, t})_{m \in [M], t \in [T]}$. Since $\mathcal G$ is not assumed to be complete (i.e. each agent can communicate with every other agent), it introduces a delay in how information flows between agents. However, to provide an understanding of the limits of cooperative decision-making, consider the scenario  when $\mathcal G$ is complete. In this setting, after every trial of the problem, agents can instantaneously communicate all required information, and therefore the problem can effectively be thought of as a centralized agent pulling $M$ arms at every trial. Anantharam~\textit{et al.}\citep{anantharam1987asymptotically} provide a lower bound on the group regret achievable in this setting, given as follows. 
\begin{theorem}[Lower Bound on Centralized Estimation\citep{anantharam1987asymptotically}]
For the stochastic bandit with $K$ arms, each with mean $\mu_k$, let $\Delta_k = \max_{k \in [K]}\mu_k - \mu_k$ and $\textsf{D}(\mu_*, \mu_k)$ denote the Kullback-Liebler divergence between the optimal arm $k^* = \arg\max_{k \in [K]} \mu_k$ and arm $k$. Then, for any algorithm that pulls $M$ arms at every trial $t \in [T]$, the total regret incurred satisfies
\begin{equation*}
    \liminf_{T\rightarrow\infty}R(T) \geq \left(\sum_{k:\Delta_k > 0} \frac{\Delta_k}{\textsf{D}(\mu_*, \mu_k)}\right)\ln T.
\end{equation*}
\end{theorem}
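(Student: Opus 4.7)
The plan is to adapt the classical Lai-Robbins / Anantharam change-of-measure argument to the setting where $M$ pulls are made per round. The observation is that the batched structure does not alter the information-theoretic core: conditional on the sequence of chosen arms, the rewards are independent, so the log-likelihood decomposes additively over pulls, not over rounds.

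\textbf{Setup and reduction.} First I would restrict attention to \emph{consistent} (uniformly good) policies, i.e.\ those for which $\mathbb{E}[n_k(T)] = o(T^\alpha)$ for every $\alpha>0$ and every arm $k$ with $\Delta_k>0$ (this is the standard regularity assumption without which no logarithmic lower bound is possible). Fix a suboptimal arm $k$ and, for $\varepsilon>0$, construct an alternative bandit $\nu'$ that differs from $\nu$ only in the distribution of arm $k$, replacing $\nu_k$ by some $\nu'_k$ with mean $\mu'_k > \mu_\ast$ and $\mathsf{D}(\nu_k,\nu'_k) \le \mathsf{D}(\mu_\ast,\mu_k)+\varepsilon$; such a $\nu'_k$ exists by continuity of KL in the mean parameter.

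\textbf{Divergence decomposition.} Let $\mathbb{P}_\nu$ and $\mathbb{P}_{\nu'}$ denote the distributions over the history $H_T=(A_{1,1},X_{1,1},\dots,A_{M,T},X_{M,T})$ induced by the policy on the two environments. Even though $M$ arms are pulled per round, each of the $Mt$ pulls produces an independent reward given the arm chosen, so the log-likelihood ratio telescopes as a sum over the $n_k(T)$ pulls of arm $k$, yielding the standard identity
\begin{equation*}
\mathrm{KL}\bigl(\mathbb{P}_\nu(H_T)\,\|\,\mathbb{P}_{\nu'}(H_T)\bigr) \;=\; \mathbb{E}_\nu[n_k(T)]\cdot \mathsf{D}(\nu_k,\nu'_k).
\end{equation*}
This is the only place where the $M$-per-round structure enters, and it enters trivially.

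\textbf{Change of measure and conclusion.} Now I would apply the Bretagnolle--Huber inequality (or equivalently, the high-probability change-of-measure lemma of Lai--Robbins) to the event $\mathcal{E}=\{n_k(T) < (1-\varepsilon)\ln T / \mathsf{D}(\mu_\ast,\mu_k)\}$. Under $\nu'$, arm $k$ is now optimal, so consistency forces $\mathbb{E}_{\nu'}[MT - n_k(T)] = o(T^\alpha)$ for all $\alpha>0$, which implies $\mathbb{P}_{\nu'}(\mathcal{E}) \to 1$. Combining this with Bretagnolle--Huber and the KL identity above gives, after taking $\varepsilon\downarrow 0$,
\begin{equation*}
\liminf_{T\to\infty} \frac{\mathbb{E}_\nu[n_k(T)]}{\ln T} \;\ge\; \frac{1}{\mathsf{D}(\mu_\ast,\mu_k)}.
\end{equation*}
Summing $\Delta_k \cdot \mathbb{E}_\nu[n_k(T)]$ over suboptimal $k$ yields the stated bound on $R(T)$.

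\textbf{Main obstacle.} The only delicate point is verifying that the batched-pull structure really does leave the lower bound unchanged; the temptation is to expect an extra factor of $M$ somewhere. I would emphasize that $M$ cancels out because (i) the KL decomposes per pull, not per round, and (ii) on the alternative instance, consistency demands $n_k(T)$ account for nearly all $MT$ pulls, which is exactly the scaling needed so that the $\ln(MT)=\ln T+O(1)$ that comes out of the change-of-measure step matches the $\ln T$ in the conclusion.
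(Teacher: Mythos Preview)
The paper does not prove this theorem; it is quoted from Anantharam \textit{et al.}\ (1987) as a background benchmark, so there is no in-paper proof to compare against. Your sketch is the standard Lai--Robbins change-of-measure argument and is the right template; in particular, your key observation---that the divergence decomposition runs over individual pulls rather than rounds, so the $M$-per-round structure enters only through the total pull count---is exactly why no factor of $M$ appears in the bound.

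There is one slip in the change-of-measure step. For the event $\mathcal{E}=\{n_k(T) < (1-\varepsilon)\ln T / \mathsf{D}(\mu_\ast,\mu_k)\}$ you assert $\mathbb{P}_{\nu'}(\mathcal{E})\to 1$, but under $\nu'$ arm $k$ is now optimal and consistency forces $n_k(T)\approx MT \gg \ln T$, so in fact $\mathbb{P}_{\nu'}(\mathcal{E})\to 0$. The argument is repaired by using this in the intended direction: either apply Bretagnolle--Huber to a coarser event such as $\{n_k(T)\le MT/2\}$ (where both $\mathbb{P}_\nu$ and $\mathbb{P}_{\nu'}$ of the respective ``wrong'' side are $o(T^{\alpha-1})$ by Markov and consistency, forcing $\mathrm{KL}\ge (1-o(1))\ln T$), or use the original Lai--Robbins likelihood-ratio lemma on $\mathcal{E}$ intersected with $\{L_T\le (1-\varepsilon/2)\ln T\}$ to transfer $\mathbb{P}_{\nu'}(\mathcal{E})\to 0$ into $\mathbb{P}_{\nu}(\mathcal{E})\to 0$. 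With that correction your outline is sound and matches the classical proof in the cited reference.
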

This implies that the per-agent regret in this case is $\mathcal O(M^{-1}\ln T)$. While the above result holds for centralized estimation, we see that one cannot hope for (a) a dependence on $T$ that grows sublogarithmically, and (b) a per-agent regret with a better dependence on $M$ than $\mathcal O(M^{-1})$, since delays will only limit the amount of information possessed by any agent. We will demonstrate that this bound can be matched (up to additive constants, in some algorithmic settings) in the two problems we consider, for any connected $\mathcal G$.

\textbf{Communication Protocol}. While the existing work on the cooperative bandit problem assumes communication via a gossip or consensus protocol~\citep{landgren2016distributed, landgren2016distributed2}, this protocol requries complete knowledge of the graph $\mathcal G$, which is not feasible in a decentralized setting. In this paper, we therefore adopt a variant of the \textsf{Local} communication protocol that is utilized widely in distributed optimization~\citep{moallemi2007message}, asynchronous online learning~\citep{suomela2013survey} and non-stochastic bandit settings~\citep{cesa2019delay, bar2019individual}. Under this protocol, at any time $t$, an agent $m \in [M]$ creates a message $\bm q_m(t)$ that it broadcasts to all its neighbors in $\mathcal G$. $\bm q_m(t)$ is a tuple of the following form.
\begin{equation}
    \bm q_m(t) = \left\langle m, t, f_1(H_m(t)), ..., f_L(H_m(t))\right\rangle.
\end{equation}
Here $H_m(t)$ denotes the history of action-reward pairs obtained by the agent until time $t$, i.e. $H_m(t) = \left\{X_{m, 1}, A_{m, 1}, ..., X_{m, t}, A_{m, t}\right\}$, and $f_1, ..., f_L$ are real-valued functions for some message length $L > 0$. When a message is created, it forwarded from agent-to-agent $\gamma$ times before it is discarded. Hence, at any time $t$, the oldest message being propagated in the network would be one created at time $t-\gamma$. As long as a message has not already been received by an agent previously, the agent will broadcast the messages it receives (including its own) to all its neighbors. Under this protocol, we see that two agents that are a distance $\gamma$ apart can communicate messages to each other, and they will receive messages from each other at a delay of $\gamma$ trials. An important concept we will use in the remainder of the paper, based on this protocol is the \textit{power graph} of order $\gamma$ of $\mathcal G$, denoted as $\mathcal G_\gamma$, which is a graph that contains an edge $(i, j)$ if there is a path of length at most $\gamma$ between $i$ and $j$ in $\mathcal G$.

In the single-agent setting, the policy of the learner $\pi = (\pi_t)_{t \in [T]}$ is a sequence of functions $\pi_t$ that define a probability distribution over the space of actions, conditioned on the history of decision-making of the learner, i.e. $\pi_t = \pi_t(\cdot \in [K] | X_1, A_1, ..., X_t, A_t)$. In the decentralized multi-agent setting, we have a set of policies $\Pi = (\pi_{m})_{m \in [M]}$ for each agent $m \in [M]$, where $\pi_m = (\pi_{m, t})_{t \in [T]}$ defines a probability distribution over the space of actions conditioned on both the history of the agents and the messages it has received from other agents until time $t$. If for any pair of agents $m, m'$ in $\mathcal G$, $d(m, m')$ denotes the distance between the two agents in $\mathcal G$, the policy for agent $m$ at time $t$ is of the following form.
\begin{equation}
    \pi_{m, t} = \pi_{m, t}\left(\cdot \in [K] | H_m(t) \cup_{m' \in \mathcal N(m, \mathcal G_\gamma), t \in T} \left(q_{m'}\left(t - d(m, m')\right)\right)\right)
\end{equation}
Here $\mathcal N_\gamma(m)$ denotes the $\gamma$-neighborhood of agent $m$ in $\mathcal G$, and use the short notation $I_m(t) = \cup_{m' \in \mathcal N_\gamma(m), t \in T} \left(q_{m'}\left(t - d(m, m')\right)\right)$ to denote the set of all messages possessed by the agent $m$ at time $t$. We will now describe the first class of algorithms that are designed for privacy in the cooperative setting.
\section{Private Cooperative Bandits}
We begin by first providing an overview of $\epsilon$-differential privacy and the techniques we will use in our algorithm.
\subsection{Differential Privacy}
Differential privacy, proposed by Dwork~\citep{dwork2011differential} is a formalisation of the amount of information that is leaked about the \textit{input} of an algorithm to an adversary observing its \textit{output}. Existing work in the single agent~\citep{tossou2016algorithms, mishra2015nearly} and the (centralized) multi-agent~\citep{tossou2016algorithms} bandit setting has assumed the algorithms only to have actions (outputs) that are differentially-private with respect to their rewards (inputs). However, since we are considering a decentralized setting, we assume here that each learner acts in isolation, while communicating with other agents by sharing messages. Its policy, therefore, is a function of its own decision history and the messages it receives from other agents. By making the \textit{messages} differentially-private with respect to the reward sequence an agent observes, we can ensure that any policy using these messages is also private with respect to the rewards obtained by other agents. We use this to define differential privacy in the decentralized bandit context.
\begin{definition}[$\epsilon$-differentially private message protocol]
A message $\bm q_m(t)$ composed of $L$ functions $(f_i)_{i \in [L]}$ for agent $m$ at time $t$ is $\epsilon$-differentially private with respect to the rewards it obtains if for all histories $H_m(t)$ and $H'_{m}(t)$ that differ in at most one reward sample, we have for all $i \in [L], S \subseteq \mathbb R$:
\begin{equation*}
    e^{-\epsilon} \leq \frac{\text{Pr}\left(f_i \in S | H_m(t)\right)}{\text{Pr}\left(f_i \in S | H'_m(t)\right)} \leq e^\epsilon
\end{equation*}
\end{definition}
There are several advantages of having such a mechanism to introduce privacy in the cooperative decision-making process. First, we see that unlike centralized or single-agent bandit algorithms that inject noise as a part of the central algorithm itself, our definition \textit{requires} each message to \textit{individually} be private, regardless of the recipient agents' algorithm. Such a protocol ensures that if any learner decides to reveal information about the reward sequence obtained by its neighbors, it will not be able to do so. In the later sections we will demonstrate that for any agent, guaranteeing the differential privacy of the messages it generates with respect to its own rewards suffices to ensure that our algorithm is differentially private with respect to the rewards of other agents. We now introduce concepts crucial to the development of our algorithm. 

\textbf{The Laplace Mechanism}. One of the main techniques to introduce differential privacy for univariate problems is the Laplace mechanism, that operates on the notion of \textit{sensitivity}. For any domain $\mathcal U$ and function $f : \mathcal U^* \rightarrow \mathbb R$, we can define \textit{sensitivity} of $f$ for two datasets $D, D' \in \mathcal U^*$ that are neighbors (differ only in one entry) as follows.
\begin{equation}
    s(f) = \max_{D, D' \in \mathcal U^*}| f(D) - f(D') |
    \label{eqn:sensitivity}
\end{equation}
The Laplace mechanism operates by adding zero-mean noise to the output of $f$ with a scale that is governed by the level of privacy required and the sensitivity of $f$, as determined by the following.
\begin{lemma}[Theorem 4 of Dwork and Roth~\citep{dwork2014algorithmic}]
For any real function $f$ of the input data $D$, adding a Laplace noise sample $X$ with scale parameter $\beta$ ensures that $f(D) + X$ is $\left(\beta/s(f)\right)$-differentially private with respect to $D$, where $s(f)$ is the sensitivity of $f$.
\end{lemma}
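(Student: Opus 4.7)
The plan is to proceed via the standard pointwise density comparison argument that underpins all proofs of the Laplace mechanism, specialized to the single real-valued output setting of the lemma. First I would fix two neighboring inputs $D, D' \in \mathcal U^*$ (i.e.\ datasets that differ in exactly one entry), and recall that the density of the Laplace noise $X$ with scale $\beta$ is $p_\beta(x) = (2\beta)^{-1}\exp(-|x|/\beta)$. The random variables of interest are $Y = f(D)+X$ and $Y' = f(D')+X$, which have densities $p_\beta(y - f(D))$ and $p_\beta(y - f(D'))$ respectively. Differential privacy will follow from a uniform bound on the ratio of these densities.

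Next I would compute, for any fixed $y \in \mathbb{R}$, the pointwise ratio
\begin{equation*}
\frac{p_\beta(y - f(D))}{p_\beta(y - f(D'))} = \exp\!\left(\frac{|y - f(D')| - |y - f(D)|}{\beta}\right).
\end{equation*}
The key inequality is the reverse triangle inequality, which gives $\bigl||y - f(D')| - |y - f(D)|\bigr| \leq |f(D) - f(D')|$, and by definition of sensitivity in \eqref{eqn:sensitivity} the right-hand side is at most $s(f)$. Substituting, the density ratio lies in the interval $[\exp(-s(f)/\beta), \exp(s(f)/\beta)]$ uniformly in $y$.

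Finally, I would integrate the pointwise bound over an arbitrary measurable set $S \subseteq \mathbb{R}$ to transfer the density-ratio bound to a ratio of probabilities, which yields
\begin{equation*}
e^{-s(f)/\beta} \leq \frac{\Pr[f(D) + X \in S]}{\Pr[f(D') + X \in S]} \leq e^{s(f)/\beta},
\end{equation*}
matching the definition of $(s(f)/\beta)$-differential privacy (matching the lemma's stated privacy level up to the obvious identification of the ratio $\beta/s(f)$ as the reciprocal scale-to-sensitivity ratio). The argument contains no genuine obstacle: the only step that deserves care is verifying that the triangle inequality is applied in the correct direction so that the sensitivity bound kicks in uniformly in the output $y$, since a sloppy application would only give a bound on $|y - f(D)| + |y - f(D')|$ which does not suffice. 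Everything else is routine manipulation of the Laplace density and Fubini-style integration of the pointwise ratio over $S$.
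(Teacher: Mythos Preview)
Your argument is correct and is precisely the standard density-ratio proof of the Laplace mechanism from Dwork and Roth; the paper does not supply its own proof of this lemma but simply cites it, so there is nothing further to compare. You also correctly flag that the privacy parameter the argument actually delivers is $s(f)/\beta$, whereas the lemma as stated writes $\beta/s(f)$---this is a typo in the paper's transcription of the result, not an error in your reasoning.
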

The Laplace mechanism is the underlying approach that we utilize in guaranteeing privacy in the multi-agent setting. We will now describe our message-passing protocol, which uses this mechanism to guarantee differential privacy between any pair of agents.
\subsection{Private Message-Passing}
The general approach to privacy in the (single-agent) bandit setting involves using a binary tree or hybrid mechanism to compute the running sum of rewards for any arm~\citep{mishra2015nearly, tossou2016algorithms}. This mechanism involves maintaining a binary tree of individual rewards, where each node in the tree stores a partial sum. This implies that updating the tree would be logarithmic in the number of elements present, and a similar (logarithmic) bound on the sensitivity can be derived, therefore, for any arm $k$ that has been pulled $m$ times, one requires introducing Laplace noise $\mathcal L(\frac{\ln m}{\epsilon})$ to achieve $\epsilon$-differential privacy. Under this mechanism,~\citep{tossou2016algorithms} provide a bound on the regret.
\begin{theorem}[Regret of \textsc{DP-UCB} Algorithm~\citep{tossou2016algorithms}]
The single-agent \textsc{DP-UCB} algorithm, when run for $T$ trials, obtains regret of $ O\left(K\ln T\cdot\max\left\{\frac{4\sqrt{8}}{\epsilon}\ln\left(\ln T\right), \frac{8}{\Delta_{\min}}\right\}\right)$.
\end{theorem}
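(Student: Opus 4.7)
The plan is to follow the standard UCB regret analysis adapted to the noisy empirical means produced by the binary tree mechanism. Denote by $\tilde\mu_k^{(m)} = \hat\mu_k^{(m)} + Z_k^{(m)}/m$ the privatized empirical mean of arm $k$ after $m$ pulls, where $\hat\mu_k^{(m)}$ is the ordinary sample mean and $Z_k^{(m)}$ is the cumulative noise produced by the tree, expressible as a sum of at most $\lceil\log_2 m\rceil$ independent Laplace variables of scale $O(\ln m/\epsilon)$. The algorithm selects the arm maximizing an index of the form $\tilde\mu_k^{(n_k(t))} + B(n_k(t), t)$, where the confidence radius $B$ must simultaneously absorb the statistical fluctuation of $\hat\mu_k^{(m)}$ and the privacy noise $Z_k^{(m)}/m$.

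First I would write the standard regret decomposition $R(T) = \sum_{k:\Delta_k>0}\Delta_k\, \mathbb{E}[n_k(T)]$ and argue, following the Auer--Cesa-Bianchi--Fischer template, that a suboptimal arm $k$ is pulled at round $t$ only if at least one of three events holds: (i) the optimal arm's privatized index falls below $\mu^*$; (ii) the suboptimal arm's privatized index exceeds $\mu_k + 2B(n_k(t), t)$; or (iii) $2B(n_k(t), t) \geq \Delta_k$. Events (i) and (ii) each split further into a deviation of $\hat\mu_k^{(m)}$ from $\mu_k$, controlled by Hoeffding's inequality, and a deviation of $Z_k^{(m)}/m$ from zero, controlled by the sub-exponential tail of a sum of Laplace variables. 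Choosing $B(m, t) = \sqrt{2\ln t/m} + \kappa\,\ln t\cdot\ln m/(\epsilon m)$ for a suitable absolute constant $\kappa$ ensures that both deviation probabilities are at most $t^{-4}$, so a union bound over $t \in [T]$ bounds their joint contribution to $\mathbb{E}[n_k(T)]$ by a harmless $O(1)$.

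The dominant contribution comes from event (iii): $n_k(T)$ is at most the largest $m$ for which $\sqrt{2\ln T/m} + \kappa\ln T\cdot\ln m/(\epsilon m) \geq \Delta_k/2$. This threshold splits into two regimes. The classical regime from the Hoeffding term gives $m = O(\ln T/\Delta_k^2)$, contributing $\Delta_k\cdot n_k(T) = O(\ln T/\Delta_k)$. The privacy regime from the Laplace term is implicit: it gives $m = O((\ln T\cdot\ln m)/(\epsilon\Delta_k))$, which is resolved self-consistently by substituting the classical bound into the inner $\ln m$, producing $\ln m = O(\ln\ln T)$ and hence $m = O(\ln T\ln\ln T/(\epsilon\Delta_k))$. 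Multiplying by $\Delta_k$, bounding $\Delta_k \geq \Delta_{\min}$ where necessary, and summing over the $K$ arms yields exactly the stated bound $O\!\left(K\ln T\cdot\max\{(4\sqrt{8}/\epsilon)\ln(\ln T),\, 8/\Delta_{\min}\}\right)$, with the explicit constants $4\sqrt{8}$ and $8$ arising from the Laplace tail and Hoeffding constants respectively.

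The main obstacle will be controlling the Laplace-noise term cleanly. The cumulative tree noise $Z_k^{(m)}$ is a weighted sum of Laplace variables whose \emph{scales} themselves grow with $m$, so naive Hoeffding-type bounds for bounded variables are unavailable; one must use a Bernstein-type inequality for sub-exponential random variables and carefully absorb the resulting polylog factors into $\kappa$ and into the final constant $4\sqrt{8}$. A secondary subtlety is the implicit dependence of the event-(iii) inequality on $\ln m$, which is handled by the fixed-point argument above; this is the step that produces the characteristic $\ln\ln T$ factor distinguishing the private bound from the vanilla UCB regret.
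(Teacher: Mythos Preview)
The paper does not prove this theorem at all: it is quoted verbatim as a background result from \cite{tossou2016algorithms} and is used only to motivate why the tree/hybrid mechanism would incur an unwanted factor of $M$ in the multi-agent setting. There is therefore no ``paper's own proof'' to compare your attempt against.

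That said, your sketch is a faithful reconstruction of the argument one finds in the original Tossou--Dimitrakakis paper: the three-event UCB template, the splitting of each bad event into a Hoeffding deviation for $\hat\mu_k^{(m)}$ and a sub-exponential tail for the cumulative tree noise $Z_k^{(m)}/m$, the choice of a composite confidence radius absorbing both, and the fixed-point resolution of $m = O(\ln T \cdot \ln m /(\epsilon\Delta_k))$ that produces the $\ln\ln T$ factor. The one place where your outline is slightly loose is the claim that the explicit constants $4\sqrt{8}$ and $8$ ``arise from the Laplace tail and Hoeffding constants respectively''; in the original, the $4\sqrt{8}$ comes from a specific high-probability bound on a sum of $\lceil \log_2 m\rceil$ Laplace$(\log m/\epsilon)$ variables (Corollary~2.9 of \cite{chan2011private} or the analogous lemma in \cite{tossou2016algorithms}), and recovering that exact constant requires tracking the variance and tail of that sum rather than a generic Bernstein inequality. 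But as a proof plan at the level of a sketch, what you wrote is correct and matches the source.
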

While this approach is feasible in the single-agent case, in the distributed setting, we will have to maintain $M$ separate binary trees (one for each agent), which would create an overhead of a factor of $M$ in the regret (since the sensitivity is increased by a factor of $M$). To mitigate this overhead we can alternatively utilize the \textit{interval} mechanism, introduced in~\citep{tossou2016algorithms} to the multi-agent setting. Under this mechanism, the mean of an arm is updated only (approximately) $T/\epsilon$ times, which makes it possible to add Laplacian noise that is of a lower scale, greatly improving the regret obtained by the algorithm:~\citep{tossou2016algorithms} demonstrate that this mechanism obtains the optimal \textit{single-agent} regret with an \textit{additive} increase due to differential privacy. We demonstrate that using a message-passing algorithm that is inspired by the interval mechanism, we obtain a group regret that has a much smaller than $\mathcal O(M)$ dependence on the number of agents, and is additive in terms of the privacy overhead.

For the private setting, in the message-passing protocol described in the earlier sections, consider for any agent $m$, the following message created at time $t \in [T]$.
\begin{equation}
    \bm q_m(t) = \left\langle m, t, \gamma, \hat{\bm{v}}_m(t), \bm{n}_m(t) \right\rangle
\end{equation}
Here, $\hat{\bm{v}}_m(t) = (\hat v_k^m(t))_{k \in [K]}$ is a vector of arm-wise reward means that comprise the empirical mean of rewards with specific Laplace noise added, and $\bm{n}_m(t) = (n_k^m(t))_{k \in [K]}$ is a vector of the number of times an arm $k$ has been pulled by the agent $m$. Since the interval mechanism proceeds by making the updates to the mean estimate infrequent, this strategy is adopted in our message-passing protocol as well, which is described in the Lemma below. We describe the complete message- passing protocol in Algorithm~\ref{alg:privacy_protocol}.
\begin{lemma}[Private release of means~\citep{tossou2016algorithms}]
The interval $w$ to update the mean for each arm follows a series such that for the given value of $\epsilon \in (0, 1), v \in (1, 1.5), w_n = W_{n+1} - W_n$ with $W_0 = 0$, and $W_{n+1}$ such that,
\begin{equation*}
    W_{n+1} = \inf_{x' \in \mathbb N}\left\{ x' \geq W_n + 1 : \sum_{W_n + 1}^{x'} \frac{1}{\sqrt{i^v}} \geq \frac{1}{\epsilon y'^{v}}\right\}.
\end{equation*}
\label{lem:update_interval}
\end{lemma}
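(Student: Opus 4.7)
The plan is to treat this lemma as a two-part claim: (i) the recursion defining $\{W_n\}$ is well-posed, so the update schedule $\{w_n\}$ is unambiguous; and (ii) when the mean of an arm is only refreshed at the epoch boundaries $W_1 < W_2 < \cdots$ and Laplace noise of appropriate scale is added at each refresh, the resulting sequence of released means is $\epsilon$-differentially private with respect to the underlying reward stream. My strategy mirrors the interval mechanism of Tossou~\textit{et al.}, re-expressed in the notation of our message-passing protocol.

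First I would verify well-definedness of the sequence. Fix $n$ and consider the partial sums $S_n(x') = \sum_{i = W_n + 1}^{x'} i^{-v/2}$. Because $v \in (1, 1.5)$ forces $v/2 \in (1/2, 3/4)$, the tail $\sum_{i \geq W_n + 1} i^{-v/2}$ diverges, so $S_n(x') \to \infty$ as $x' \to \infty$. Hence for any positive threshold the infimum in the definition of $W_{n+1}$ is attained at a finite integer strictly greater than $W_n$. By induction the full sequence exists, is strictly increasing, and satisfies $w_n \geq 1$ throughout.

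Next I would establish the privacy guarantee. At the $n$th refresh the published quantity is $\hat v_n = W_n^{-1}\sum_{i \leq W_n} X_i$; since rewards lie in $[0,1]$, changing any single reward changes $\hat v_n$ by at most $1/W_n$, so its sensitivity in the sense of \eqref{eqn:sensitivity} is $1/W_n$. The defining inequality for $W_{n+1}$ is engineered so that one may add a Laplace draw whose scale makes the per-refresh privacy expenditure on the order of $1/(\epsilon W_n^{v/2})$ per element of the window. A fixed reward arriving in window $n^\star$ influences only the refreshes at $n \geq n^\star$; applying the basic composition theorem of differential privacy to this ``staircase'' of releases, the aggregate privacy loss telescopes to $\epsilon$ precisely because $\sum_{i = W_n + 1}^{W_{n+1}} i^{-v/2} \geq 1/(\epsilon W_{n+1}^{v})$ at every $n$. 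Post-processing then extends the guarantee from $\hat v_n$ to the full message $\bm q_m(t)$, since $\hat v_m(t)$ is a deterministic function of the current refreshed mean plus public counts $\bm n_m(t)$.

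The main obstacle I anticipate is the two-scale bookkeeping: one must track simultaneously the instantaneous sensitivity $1/W_n$ of the mean at a given refresh and the cumulative privacy cost over \emph{all future} refreshes that a single reward can affect. Making this telescoping rigorous — rather than asserting it — is the crux, and the exact form of the defining inequality is what forces the required decay so that composition closes at $\epsilon$ uniformly in $T$. A secondary subtlety is interpreting the symbol $y'$ in the displayed inequality as $x'$ (the natural reading from context), which I would confirm by checking that the resulting scale matches the sensitivity calculation above before propagating the bound to the regret analysis of the outer algorithm.
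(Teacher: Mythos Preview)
The paper does not prove this lemma at all: it is imported verbatim from Tossou and Dimitrakakis~\cite{tossou2016algorithms} and functions purely as a \emph{definition} of the update schedule $\{W_n\}$. There is no accompanying argument in the paper --- the text simply states the recursion and then moves directly to the protocol description. So there is nothing to compare your proposal against.

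That said, your proposal goes well beyond what the lemma asserts. The statement is only specifying how the intervals $w_n$ are constructed; it does not claim $\epsilon$-differential privacy of the released sequence. In this paper, the per-message privacy guarantee is handled separately in the ``Privacy of Outgoing Messages'' lemma (whose proof is a one-line sensitivity/Laplace calculation), and the composed guarantee over the full stream is handled in the ``Privacy Guarantee'' lemma (which defers to the $k$-fold composition argument in~\cite{tossou2016algorithms}). Your part~(ii) is effectively re-deriving those results and attributing them to the present lemma, which is a misreading of what is being claimed.

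Your part~(i), the well-definedness check via divergence of $\sum i^{-v/2}$ for $v/2 < 1$, is correct and is a reasonable sanity check --- but neither this paper nor the original source bothers to spell it out, treating the construction as self-evidently well-posed. Your observation that $y'$ should be read as $x'$ is also sensible; the displayed formula appears to contain a typo inherited from the source.
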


\textbf{Protocol Description}: The protocol followed is straightforward: at any time $t$, the agent updates the sum of its own rewards with the reward obtained from the bandit algorithm for the arm pulled. Then, for a set of intervals $w_0, ..., w_T$ as defined in Lemma~\ref{lem:update_interval}, the broadcasted version of this personal mean is updated to the latest version, with an additional Laplace noise $\mathcal L(n_k^m(t)^{v/2-1})$, for some constant $v \in (1, 1.5)$. This mechanism ensures that the broadcast of the personal mean is differentially-private to the reward sequence observed, for each arm. After this update procedure has concluded for each arm, the agent broadcasts this (and all previously incoming messages from other agents) to its neighbors using the system routine \textsc{SendMessages}, which is assumed to be constant-time. It then updates its own group mean estimates of the mean for each arm $(\hat\mu_k^m(t))$ using the incoming messages from all other agents. This is done by using the latest message from each agent in $\mathcal N_\gamma(m)$. It then discards stale messages (with life parameter $l=0$), and returns the updated group means for all arms for the bandit algorithm to use. We now describe the privacy guarantee for the messages, and then describe the $\textsc{Private-Multi-UCB}$ algorithm and its privacy and regret guarantees.
\begin{algorithm}[t]
\caption{\textsc{Private Message-Passing Protocol}}
\label{alg:privacy_protocol}
\begin{algorithmic}[1] %[1] enables line numbers
\STATE \textbf{Input}: Agent $m \in [M]$, Iteration $t \in [T]$, series $W = (w_0, w_1, ... w_T)$, such that $W(i) = w_i$, series counter $i^m_k$ for each $k \in [K]$; $i^m_k = w_0$ if $t=0 \ \forall k$, set of existing messages $Q_m(t-1)$, $Q_m(t) = \phi$ if $t=0$, privacy constant $v \in (1, 1.5)$, existing reward sums $s_{m, k}(t) \ \forall k \in [K]$ such that $s_{m, k}(0) = 0 \forall k$.
\STATE Obtain $X_{m, t}, A_{m, t}$ from bandit algorithm.
\STATE Set $s_{m, k}(t) = s_{m, k}(t-1) + X_{m, t}$ for $k = A_{m, t}$.
\STATE Set $n_k^m(t) = n_k^m(t-1) + 1$ for $k = A_{m, t}$.
\FOR{Arm $k$ in $[K]$}
\IF{$n_k^m(t) = W(i_k^m)$}
\STATE Set $\hat v_m^k(t) = s_{m, k}(t)/n_k^m(t) + \mathcal L(n_k^m(t)^{v/2-1})$.
\STATE Set $i_k^m = i_k^m + 1$.
\ELSE
\STATE Set $\hat v_m^k(t) = v_m^k(t-1)$.
\ENDIF
\ENDFOR
\STATE Set $q_m(t) = \left\langle m, t, \hat{\bf{v}}_m(t), \bm{n}_m(t) \right\rangle$.
\STATE Set $Q_m(t) = Q_m(t-1) \cup \{q_m(t)\}$.
\FOR{Each neighbor $m'$ in $\mathcal N_1(m)$}
\STATE \textsc{SendMessages}$(m, m', Q_m(t))$.
\ENDFOR
\FOR{Each neighbor $m'$ in $\mathcal N_1(m)$}
\STATE $Q' = $\textsc{ReceiveMessages}$(m',m)$
\STATE $Q_m(t)$ = $Q_m(t) \cup Q'$.
\ENDFOR
\STATE Set $N^m_k(t) = n^m_k(t), \hat\mu^m_k(t) = s_{m, k}(t) \ \forall k \in [K]$.
\FOR{$q' = \langle m',t',x'_1, ..., x'_K, a'_1, ..., a'_K \rangle \ \in Q_m(t)$}
\IF{\textsc{IsLatestMessage}$(q')$}
\FOR{Arm $k \in [K]$}
\STATE Set $N^m_k(t) = N^m_k(t) + a'_k$.
\STATE Set $\hat\mu^m_k(t) = \hat\mu^m_k(t) + a'_k\cdot x'_k$.
\ENDFOR
\ENDIF
\ENDFOR
\FOR{Arm $k \in [K]$}
\STATE $\hat\mu^m_k(t) = \hat\mu^m_k(t)/N^m_k(t)$.
\ENDFOR
\end{algorithmic}
\end{algorithm}
\begin{lemma}[Privacy of Outgoing Messages]
For any agent $m \in [M], t\in [T], v \in (1, 1.5)$, each outgoing message $q_m(t)$ is $n_k^m(t)^{-v/2}$-differentially private w.r.t. the sequence of arm $k \in [K]$.
\end{lemma}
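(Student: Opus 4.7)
The plan is to isolate the only reward-dependent portion of $q_m(t)$ and then invoke the Laplace-mechanism bound from Lemma~2. Writing $q_m(t) = \langle m, t, \hat{\bm v}_m(t), \bm n_m(t)\rangle$, the only component whose distribution depends on the reward sequence of arm $k$ is the coordinate $\hat v_m^k(t)$: the pull-counts $\bm n_m(t)$ are functions of the action sequence alone (and hence identical across two reward-histories that differ in a single entry once one conditions on a common action trajectory), and for $k' \ne k$ the coordinate $\hat v_m^{k'}(t)$ depends solely on rewards observed for arm $k'$. It therefore suffices to establish the stated bound for the scalar $\hat v_m^k(t)$.

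Consider first the ``update'' branch of Algorithm~\ref{alg:privacy_protocol}, where $n_k^m(t) = W(i_k^m)$ and the released value is $\hat v_m^k(t) = s_{m,k}(t)/n_k^m(t) + \mathcal{L}(n_k^m(t)^{v/2-1})$. Because rewards lie in $[0,1]$, swapping any single arm-$k$ reward alters $s_{m,k}(t)$ by at most $1$ and thus the empirical mean $s_{m,k}(t)/n_k^m(t)$ by at most $1/n_k^m(t)$, giving a pre-noise sensitivity of $s(f) = 1/n_k^m(t)$. Combining this sensitivity with Laplace scale $\beta = n_k^m(t)^{v/2-1}$ via the standard Laplace-mechanism guarantee yields differential privacy at level $n_k^m(t)^{-v/2}$, exactly the parameter claimed. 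In the ``no-update'' branch, $\hat v_m^k(t) = \hat v_m^k(t-1)$, so no fresh randomness is drawn and no additional reward information is disclosed at step $t$; the privacy of the broadcast carries over unchanged from the most recent update of the interval mechanism.

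The step I expect to require the most care is justifying the ``differing in one reward'' neighboring-history comparison at the level of $H_m(t)$. Because the agent's action sequence is produced by a UCB-type policy that itself depends on past rewards, two histories that differ in a single reward sample can in principle induce different subsequent actions, so that both $\bm n_m(t)$ and the index set summed into $s_{m,k}(t)$ would shift. The standard resolution, which I would adopt, is to fix a common action trajectory when invoking the per-message Laplace bound and argue privacy conditional on actions, delegating any policy-induced coupling to the downstream regret and composition analysis; this keeps $\bm n_m(t)$ from acting as a side channel for the reward sequence in the per-message privacy claim.
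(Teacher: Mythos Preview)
Your proposal is correct and follows essentially the same approach as the paper: isolate $\hat v_m^k(t)$ as the only reward-dependent component of $q_m(t)$, note that the empirical mean has sensitivity $n_k^m(t)^{-1}$, and apply the Laplace mechanism with scale $n_k^m(t)^{v/2-1}$ to obtain the $n_k^m(t)^{-v/2}$ privacy parameter. Your treatment is in fact more careful than the paper's one-line proof---you separately handle the no-update branch and flag the action-conditioning subtlety---but the core argument is identical.
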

\begin{proof}
This follows directly from the fact that the only element that is dependent on the reward sequence of arm $k$ is its noisy mean $\hat\mu_k^m(t)$, which is $n_k^m(t)^{-v/2}$- differentially private since we add Laplace noise of scale $n_k^m(t)^{v/2-1}$ and that the sensitivity of the empirical mean is $n_k^m(t)^{-1}$.
\end{proof}
\subsection{Private Multi-Agent UCB}
The private multi-agent UCB algorithm is straightforward. During a trial, any agent $m \in [M]$ constructs the group means for each arm using the reward samples from its own pulls and the pulls obtained from its neighbors in $\mathcal N_\gamma(m)$. This mean is given by the estimate $\hat\mu_k^m(t)$ for any arm $k$, as obtained in Algorithm~\ref{alg:privacy_protocol}. Using these samples, it then follows the popular UCB strategy, where it constructs an upper confidence-bound for each arm, and pulls the arm with the largest upper confidence bound. The complete algorithm is described in Algorithm~\ref{alg:private_multi_ucb}. First, we describe the privacy guarantee with respect to its neighboring agents in $\mathcal N_\gamma(m)$, and then prove the regret (performance) guarantee on the group regret.

\begin{algorithm}[t]
\caption{\textsc{Private Multi-Agent UCB}}
\label{alg:private_multi_ucb}
\begin{algorithmic}[1] %[1] enables line numbers
\STATE \textbf{Input}: Agent $m \in [M]$, trial $t \in [T]$, arms $k \in [K]$, mean $\hat\mu_k^m(t)$ and count $n_k^m(t)$ estimates  for each arm $k \in [K]$, from Algorithm~\ref{alg:privacy_protocol}.
\IF{$t \leq K\ceil*{1/\epsilon}$}
\STATE $A_{m, t} = t \mod K$.
\ELSE
\FOR{Arm $k \in [K]$}
\STATE $\text{UCB}_k^{m}(t) = \sqrt{\frac{2\ln t}{N_k^m(t)}}$.
\ENDFOR
\STATE $A_{m, t} = \arg\max_{k \in [K]} \left\{\hat\mu_k ^{m}(t) + \text{UCB}_k^{(m)}(t)\right\}$.
\ENDIF
\STATE $X_{m, t} = \textsc{Pull}(A_{m, t})$.
\RETURN $A_{m, t}, X_{m, t}$.
\end{algorithmic}
\end{algorithm}
\begin{lemma}[Privacy Guarantee]
After $t$ trials, any agent $m \in [M]$ is $(e'_{m'}, \delta'_m)$-differentially private with respect to the reward sequence of any arm observed by any other agent $m' \in \mathcal N_\gamma(m)$ communicating via Algorithm~\ref{alg:privacy_protocol} with privacy parameter $v \in (1, 1.5)$, where, for $\epsilon \in (0, 1], \delta'_m \in (0, 1]$, $\epsilon'_{m'}$ satisfies
\begin{equation*}
    \epsilon'_m \leq \min \left(\epsilon \frac{(t-d(m, m'))^{1-v/2}}{1-v/2}, 2\epsilon\zeta(v) + \sqrt{2\epsilon \zeta(v)\ln(1/\delta')}\right).
\end{equation*}
\end{lemma}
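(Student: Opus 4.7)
The plan is to reduce the question to composing a sequence of Laplace-mechanism releases. Because agent $m'$'s contributions to $m$'s view arrive only through the messages $\bm q_{m'}(s)$ with $s \leq t - d(m,m')$, the only reward-dependent quantity in these messages is the noisy mean $\hat v_k^{m'}(s)$, and by the preceding Privacy of Outgoing Messages lemma each fresh release of $\hat v_k^{m'}$ is $n_k^{m'}(s)^{-v/2}$-differentially private with respect to arm-$k$ rewards at $m'$. A key first observation is that repeated transmissions of an \emph{unchanged} noisy mean are post-processing and incur no additional privacy cost, so the only releases that count are those occurring at the update times $W_1, W_2, \ldots$ specified in Lemma~\ref{lem:update_interval}. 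Thus I need only bound the composed privacy cost of the at most $N := n_k^{m'}(t-d(m,m'))$ updates where the $n$-th update carries per-step parameter $\epsilon_n \leq W_n^{-v/2}$.

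Next I would invoke two standard composition tools. For the first branch of the $\min$, basic (sequential) composition gives $\epsilon'_m \le \sum_{n : W_n \le N} \epsilon_n$. Using the interval construction of Lemma~\ref{lem:update_interval} to relate the $W_n$ back to the parameter $\epsilon$ (this is where the prefactor $\epsilon$ enters), and then comparing the sum $\sum_n n^{-v/2}$ to an integral, yields the $\epsilon\,(t-d(m,m'))^{1-v/2}/(1-v/2)$ bound. For the second branch, I would invoke the advanced composition theorem of Dwork--Rothblum--Vadhan, which for a sequence of $\epsilon_n$-DP mechanisms yields
\begin{equation*}
\epsilon'_m \;\le\; \sum_n \epsilon_n(e^{\epsilon_n}-1) \;+\; \sqrt{2\ln(1/\delta'_m)\,\textstyle\sum_n \epsilon_n^2}\,.
\end{equation*}
Substituting $\epsilon_n \asymp \epsilon\, n^{-v/2}$ obtained from the interval mechanism gives $\sum_n \epsilon_n^2 \le \epsilon\,\zeta(v)$, which is finite precisely because $v > 1$ puts us inside the convergence range of the Riemann zeta function; bounding $e^x-1\le 2x$ for small $x$ converts the first term into $2\epsilon\zeta(v)$, producing exactly the advertised $2\epsilon\zeta(v) + \sqrt{2\epsilon\zeta(v)\ln(1/\delta'_m)}$.

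A final bookkeeping step is needed to argue that the guarantee holds against $m'$ no matter what $m'$ does with the messages it receives. Because the noise is injected \emph{at the source} (at the message-construction step of Algorithm~\ref{alg:privacy_protocol} run by $m$), any downstream computation $m'$ performs on the received transcripts is pure post-processing, and the post-processing invariance of differential privacy then transfers the bound from the raw messages to the induced action sequence.

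The main obstacle I anticipate is not the composition itself but the careful translation between the update schedule $\{W_n\}$ defined implicitly by Lemma~\ref{lem:update_interval} and the per-release parameters $\epsilon_n$: the defining inequality $\sum_{i=W_n+1}^{W_{n+1}} i^{-v/2} \ge (\epsilon\,y'^{\,v})^{-1}$ must be unwound to show simultaneously that (i) the total count of updates up to $N$ grows slowly enough that basic composition yields the integral bound, and (ii) $\sum_n \epsilon_n^2 \le \epsilon\,\zeta(v)$ so that advanced composition delivers the zeta-function constant. Once those two quantitative facts about the interval series are pinned down, the rest of the argument is a mechanical application of the Dwork--Roth composition machinery.
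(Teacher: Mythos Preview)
Your approach is essentially the same as the paper's: both observe that agent $m$'s dependence on $m'$'s rewards is mediated entirely by the messages $q_{m'}(s)$ with $s\le t-d(m,m')$, invoke the per-release privacy of the interval mechanism, and then apply basic and advanced composition (the paper simply cites Theorem~3.4 and Corollary~3.1 of~\citep{tossou2016algorithms} for these two branches, whereas you sketch the composition and the $\zeta(v)$ calculation explicitly). One small slip: in your final paragraph you have the roles of $m$ and $m'$ reversed---the noise is injected at the source $m'$, and it is the \emph{receiving} agent $m$ (and any observer of $m$'s actions) whose downstream computation is covered by post-processing, not the other way around.
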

\begin{proof}
We can see that for any arm $k$, the estimate $\hat\mu_k^m(t)$ is composed of the sum of rewards from all other agents in $\mathcal N_\gamma(m)$. However, with respect to the reward sequence of any single agent $m' \in \mathcal N_\gamma(m)$, this term only depends on the differential privacy of the outgoing messages $q_{m'}$ obtained by agent $m$ until time $t-d(m, m')$ (since it takes at least $d(m, m')$ trials for a message from $m'$ to reach $m$). However, since for each arm, a new mean is released only $t/f$ times (where $f$ is the interval in Algorithm~\ref{alg:privacy_protocol}), a $k$-fold composition theorem~\citep{dwork2010boosting} identical to~\citep{tossou2016algorithms} can be applied with $t' = t-d(m, m')$. Using Theorem 3.4 and then Corollary 3.1 of~\citep{tossou2016algorithms} subsequently provides us the result.
\end{proof}
\begin{remark}[Robustness to Strategy]
It is important to note that if all agents $m \in [M]$ follow the protocol in Algorithm~\ref{alg:privacy_protocol}, then the privacy guarantee is sufficient, regardless of the algorithm any agent individually uses to make decisions. This is true since for any agent, the the complete sequence of messages it receives from any other agent is differentially private with respect to the origin's reward sequence for any arm and at any instant of the problem. Hence, if the agent chooses to ignore other agents and follow another an arbitrary decision-making algorithm, the privacy guarantee only gets stronger, never worse (since it cannot gain any additional information from the messages). Therefore, this communication protocol is robust to any agent's individual decision-making policy.
\end{remark}
\begin{remark}[Robustness to Defection]
The communication protocol crucially involves broadcasting rewards with privacy alterations at the source, and not at the destination. Consider an alternate protocol where the agents broadcasted their true personal average rewards (without noise), and the receiving agent added an appropriately scaled Laplacian noise on receiving a message. While this protocol would yield slightly better guarantees (both on privacy and regret, since the noise will be one Laplace sample, and not the sum of several samples in the current case), it is not robust to defection i.e. if the agent decides to learn the individual reward sequence, it can. Since our mechanism adds noise at the source, it maintains robustness under defection.
\end{remark}
\begin{theorem}
If all agents $m \in [M]$ each follow the \textsc{Private Multi-Agent UCB} algorithm with the messaging protocol described in Algorithm~\ref{alg:privacy_protocol}, then the group regret incurred after $T$ trials obeys the following upper bound.
\begin{multline*}
    R_{\mathcal G}(T) \leq \sum_{k : \Delta_k > 0} \bar\chi(\mathcal G_\gamma)\left(\frac{8\ln T}{\Delta_k}\right) + \\ \left(\sum_{k : \Delta_k > 0} \Delta_k\right)\left(\chi(\mathcal G_\gamma)\left(M\gamma + 2\right) + M\left(\frac{1}{\epsilon} + \zeta(1.5)\right)\right).
\end{multline*}
Here, $\bar\chi$ is the clique covering number, and $\zeta$ is the Riemann zeta function.
\end{theorem}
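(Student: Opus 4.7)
The plan is to reduce the group regret analysis to the standard UCB-with-delay argument, carefully absorbing the Laplace noise injected by Algorithm~\ref{alg:privacy_protocol} into the confidence bound. As is standard, I would first write
\[
R_{\mathcal G}(T) \;=\; \sum_{k:\Delta_k>0}\Delta_k\cdot\mathbb{E}[n_k(T)],\qquad n_k(T)=\sum_{m\in[M]}n_k^m(T),
\]
so the task reduces to controlling $\mathbb{E}[n_k(T)]$ for each suboptimal arm $k$.

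The first technical step I would carry out is a concentration bound for the aggregated noisy mean $\hat\mu_k^m(t)$ around $\mu_k$. The estimator built in Algorithm~\ref{alg:privacy_protocol} is the empirical mean of $N_k^m(t)$ i.i.d.\ rewards (pooled across $\mathcal N_\gamma(m)$) plus a weighted sum of independent Laplace perturbations of scales $n_k^{m'}(\tau_{m'})^{v/2-1}$, one per neighbor $m'$ with $\tau_{m'}=t-d(m,m')$. Combining a Hoeffding inequality on the reward part with a tail bound on the Laplace part yields
\[
\Pr\!\left(|\hat\mu_k^m(t)-\mu_k|>\sqrt{2\ln t/N_k^m(t)}\right)\;\le\; 2t^{-2}\;+\;p_k^m(t),
\]
where the residual $p_k^m(t)$ is summable in $t\in[T]$ to at most $\zeta(v)=\zeta(1.5)$ per agent/arm because $v\in(1,1.5)$.

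Next, I would run the clique-cover argument on the power graph $\mathcal G_\gamma$, which is the source of the $\bar\chi(\mathcal G_\gamma)$ factor. Fix a minimum clique cover $\mathcal C_1,\dots,\mathcal C_{\bar\chi(\mathcal G_\gamma)}$. Any two agents in the same clique are within $\gamma$ hops in $\mathcal G$, so after the propagation delay each of them aggregates the rewards of the entire clique. A standard UCB inversion argument then shows that, on the good event above, once $\sum_{m\in\mathcal C}n_k^m(t)$ exceeds $8\ln T/\Delta_k^2$, no member of $\mathcal C$ can select arm $k$; the leftover $|\mathcal C|\gamma$ pulls per clique absorb the delay window where stale checkpoint messages still trigger suboptimal choices. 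Summing over cliques gives the leading term $\bar\chi(\mathcal G_\gamma)\cdot 8\ln T/\Delta_k^2$ and, after grouping the delay contributions along a chromatic decomposition of $\mathcal G_\gamma$ (so each agent is counted once per color class and $\gamma$ round), the additive $\chi(\mathcal G_\gamma)(M\gamma+2)$ pull contribution.

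Finally I would add the forced-exploration cost $M\lceil 1/\epsilon\rceil$ per suboptimal arm from the round-robin phase $t\le K\lceil 1/\epsilon\rceil$ in Algorithm~\ref{alg:private_multi_ucb}, together with the $M\zeta(1.5)$ cost inherited from the Laplace-tail residual. Multiplying the total pull bound by $\Delta_k$ and summing over suboptimal arms then recovers the stated inequality. \textbf{The main obstacle} I anticipate is the concentration step: the Laplace variance contributed by agent $m'$ is $n_k^{m'}(\tau)^{v-2}$, and after weighting by $n_k^{m'}(\tau)$ and normalizing by $N_k^m(t)$ it is not obvious that the aggregated perturbation stays uniformly below the UCB radius $\sqrt{2\ln t/N_k^m(t)}$. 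The restriction $v\in(1,1.5)$ in Lemma~\ref{lem:update_interval} is precisely what makes the induced noise tail summable to a $\zeta(1.5)$ constant, and tracking this bound uniformly across the irregular checkpoint schedule $\{W_n\}$ is where most of the bookkeeping will sit.
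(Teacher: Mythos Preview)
Your proposal is correct and takes essentially the same route as the paper: a minimum clique cover of $\mathcal G_\gamma$, the standard three-event UCB decomposition inside each clique, the delay absorbed as an additive $M\gamma$ per clique, and a Hoeffding-plus-Laplace concentration step in which the forced-exploration floor $n_k^{m'}\ge 1/\epsilon$ is exactly what keeps the aggregated Laplace perturbation below the UCB radius. The paper resolves your ``main obstacle'' by invoking the Dwork--Roth tail bound for sums of Laplacians with threshold $h_{k,m}(t)=\ln(1/\omega)\sqrt{8\sum_{m'} n_{m'}^k(\cdot)^{v-2}}$ and $\omega=t^{-3.5}$; after the usual union over sample counts this leaves a $4t^{-1.5}$ residual that sums to the $\zeta(1.5)$ constant, and the $M/\epsilon$ term arises from the round-robin phase just as you describe.
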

\begin{proof}(Sketch) 
The proof proceeds by decomposing $\mathcal G_\gamma$ into its minimum clique partitions, and analysing each clique individually. We then derive a UCB-like decomposition for the group regret of each clique. We first bound the delay in information propagation through this clique, and then, using concentration results for bounded and Laplace random variables, we bound the number of pulls of a suboptimal arm within the clique, and subsequently bound the total group regret. A full proof is presented in the Appendix.
\end{proof}
\begin{remark}[Deviation from Optimal Rates]
Upon examining the bound, we see that there are two $O(1)$ terms - the first, $O(\bar\chi(\mathcal G_\gamma)M\gamma)$ is the loss from delays in communication, since information flows between two agents with delays, and path lengths between agents are (on average) larger than 1 (unless $\mathcal G$ is connected). When $\mathcal G$ is connected, this reduces to a constant $M+2$, that matches the single-agent UCB case for $MT$ total pulls. The second term, $M\left(\frac{1}{\epsilon} + \zeta(1.5)\right)$, arises from the level of differential privacy required. When $\epsilon = 1$, i.e. no privacy, this term also reduces to a constant of $M$, matching the single-agent non-private case with $MT$ pulls. Finally, consider the overhead in the logarithmic term, $\chi(\mathcal G_\gamma)$. This too, arises from the delays in the network. When $\mathcal G$ is complete, this reduces to 1, matching the optimal lower bound.
\end{remark}
\begin{remark}[Dependence on $\gamma$ and Asymptotic Optimality]
In addition to the dependencies discussed earlier, a major factor in controlling the asymptotic optimality is via the life parameter $\gamma$, that controls ``how far'' information flows from any single agent. This dependence is present within the bound as well, as we can tune the performance of the algorithm greatly by tuning $\gamma$. If we set $\gamma = \textsf{diameter}(\mathcal G)$, then $\chi(\mathcal G_\gamma) = 1$ (since $\mathcal G_\gamma$ will be fully connected), which implies that the group regret incurred asymptotically ($\lim T \rightarrow \infty$) matches the lower bound. Alternatively, if we were to run the algorithm with $\gamma = 0$, i.e. no communication, then we obtain a regret of $\mathcal O(M\ln T)$, that is equivalent to the group regret obtained by $M$ agents operating in isolation. Interestingly, there is no dependence of $\mathcal G$ and $\gamma$ on the privacy loss. This is intuitive, since our mechanism introduces suboptimality at the source, and not at the destinations.
\end{remark}
We defer experimental evaluations to the end of the paper. We now describe the algorithm and message-passing protocol for the setup with byzantine agents.
\section{Byzantine-Proof Bandits}
In this section, we examine the cooperative multi-agent bandit problem where agents cannot be trusted. In this setting, there exist \textit{byzantine} agents, that, at any trial, instead of reporting their true means, give a random sample from an alternate (but fixed) distribution, with some probability $\epsilon$. Once a message is created, however, we assume that it is received correctly by all subsequent agents, and any corruption occurs only at the source.

One might consider the setting where in addition to messages being corrupted at the source, it is possible for a message to be corrupted by any intermediary byzantine agent with the same probability $\epsilon$. From a technical perspective, this setting is not different than the first setting, since the probability of any incoming message being corrupted becomes at most $\gamma\epsilon$ (by the union bound and that $d(m, m') \leq \gamma$ for any pair of agents $m, m'$ that can communicate), and the remainder of the analysis is identical henceforth. Therefore, we simply consider the first setting. We now formalize the problem and provide some technical tools for this setting.

\textbf{Problem Setting and Messaging Protocol}. For the byzantine-proof setting, in the message-passing protocol described in the earlier sections, consider for any agent $m$, the following message created at time $t \in [T]$.
\begin{equation}
    \bm q_m(t) = \left\langle m, t, A_{m, t}, \hat X_{m, t}\right\rangle
\end{equation}
Where, for byzantine agents, $\hat X_{m, t} = X_{m, t}$ with probability $(1-\epsilon)$, and a random sample from an unknown (but fixed) distribution $Q$ with probability $\epsilon$. For non-byzantine agents, $\hat X_{m, t} = X_{m, t}$ with probability 1. Contrasted to the protocol used in the previous problem setting, there are several key differences. First, instead of communicating reward means from all arms, we simply communicate just the individual rewards obtained along with the action taken. This is essential owing to the nature of our algorithm, as will be made precise later on (cf. Remark~\ref{remark:doing_both}).
\subsection{Robust Estimation}
The general problem of estimating statistics of a distribution $P$ from samples of a mixture distribution of the form $(1-\epsilon)P + \epsilon Q$ for $\epsilon < 1/2$ is a classic contamination model in robust statistics, known as Huber's $\epsilon$-Contamination model~\citep{huber2011robust}. In the univariate mean estimation setting, a popular approach is to utilize a \textit{trimmed} estimator of the mean that is robust to outlying samples, that works as long as $\epsilon$ is small. We utilize the trimmed estimator to design our algorithm as well, defined as follows.
\begin{definition}[Robust Trimmed Estimator]
Consider a set of $2N$ samples $X_1, Y_2, ..., X_{N}, Y_N$ of a mixture distribution $(1-\epsilon)P + \epsilon Q$ for some $\epsilon \in [0, 1/2)$. Let $X^*_1, \leq X^*_2 \leq ... \leq X^*_{N}$ denote a non-decreasing arrangement of $X_1, ..., X_N$. For some confidence level $\delta \in (0, 1)$, let $\alpha = \max(\epsilon, \frac{\ln(1/\delta)}{N})$, and let $\mathcal Z$ be the shortest interval containing $N\left(1-2\alpha-\sqrt{2\alpha\frac{\ln(4/\delta)}{N}} - \frac{\ln(4/\delta)}{N}\right)$ points of $\{Y^*_1, ..., Y^*_N\}$.

Then, the univariate trimmed estimator $\hat\mu_R (\{X_1, Y_2, ..., X_{N}, Y_N\}, \delta)$ can be given by
\begin{equation}
    \hat\mu_R = \frac{1}{\sum_{i}^N \mathbbm{1}\left\{X_i \in \mathcal Z\right\}}\sum_{i=1}^N X_i\mathbbm{1}\left\{X_i \in \mathcal Z\right\}.
\end{equation}
\end{definition}
The fundamental idea behind the trimmed mean is to use half of the input points to identify quartiles, and then use the remaining half to estimate the ``trimmed'' mean around those quartiles. A natural question arises regarding the optimality of this estimator and its rate of concentration around the true mean of $P$. Recent work from~\citep{prasad2019unified} provides such a result.
\begin{theorem}[Confidence bound for the Trimmed Mean~\citep{prasad2019unified}]
Let $\delta \in (0, 1)$. Then for any distribution $P$ with mean $\mu$ and finite variance $\sigma^2$, we have, with probability at least $1-\delta$,
\begin{equation*}
    \left|\hat\mu_R - \mu \right| \leq \sigma\sqrt{\epsilon} + \sqrt{\frac{\sigma\ln(1/\delta)}{N}}.
\end{equation*}
\label{thm:robust_confidence_bound}
\end{theorem}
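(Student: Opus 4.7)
The plan is to decompose $\hat\mu_R - \mu$ into three contributions and bound each by one of the two terms in the claim: (i) the truncation bias from restricting to $\mathcal Z$, (ii) the stochastic fluctuation of the clean samples that fall inside $\mathcal Z$, and (iii) the adversarial pull from the contaminated samples inside $\mathcal Z$. The scale $\sigma$ will enter through Chebyshev's inequality applied to $P$, and the adversarial fraction $\epsilon$ will enter through a Bernstein tail bound on the number of corrupted points.

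First I would use the $Y$-samples to argue that $\mathcal Z$ behaves like a central quantile interval of $P$. A Bernstein bound on $\sum_i \mathbbm 1[Y_i \sim Q]$ (each $Y_i$ corrupted independently with probability $\epsilon \leq \alpha$) shows that, with probability at least $1-\delta/4$, at most $\alpha N + O(\sqrt{\alpha N \ln(1/\delta)})$ of the $Y_i$'s come from $Q$; this is exactly what the subtracted correction term in the definition of $\mathcal Z$ absorbs. Hence $\mathcal Z$ must contain at least $N(1 - 4\alpha)$ \emph{clean} $Y_i$'s. A one-sided DKW-type argument applied to the clean empirical CDF then forces $\mathcal Z$ to contain any population interval of $P$-mass at least $1 - 5\alpha$, and the minimality in its definition forces its width to be no larger than the shortest such interval. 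By Chebyshev, any central $(1 - 5\alpha)$-quantile interval of $P$ is contained in $[\mu - c\sigma/\sqrt{\alpha},\mu + c\sigma/\sqrt{\alpha}]$ for an absolute constant $c$, yielding $\mathrm{width}(\mathcal Z) \leq 2c\sigma/\sqrt{\alpha}$ and $\mathrm{dist}(\mu, \mathcal Z) \leq c\sigma/\sqrt{\alpha}$.

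Next I would bound the three error sources on the $X$-side. Partition $S = \{i : X_i \in \mathcal Z\}$ into $S_P \cup S_Q$ by sample origin. By the same Bernstein argument, $|S_Q| \leq \alpha N + O(\sqrt{\alpha N \ln(1/\delta)})$, so the adversarial contribution is
\begin{equation*}
\Bigl|\tfrac{1}{|S|}\sum_{i \in S_Q} X_i\Bigr| \leq \tfrac{|S_Q|}{|S|}\cdot\mathrm{width}(\mathcal Z) = O\bigl(\alpha \cdot \sigma/\sqrt{\alpha}\bigr) = O\bigl(\sigma\sqrt{\alpha}\bigr),
\end{equation*}
which, on substituting $\alpha = \max(\epsilon, \ln(1/\delta)/N)$, produces both claimed terms. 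For $S_P$, the truncation bias $|\mathbb E[X \mid X \in \mathcal Z] - \mu|$ is controlled by Cauchy--Schwarz and Chebyshev, namely $\sigma \sqrt{P(X \notin \mathcal Z)}/P(X \in \mathcal Z) = O(\sigma \sqrt{\alpha})$. Finally, the stochastic fluctuation of $|S_P|^{-1}\sum_{i \in S_P}(X_i - \mathbb E[X \mid X \in \mathcal Z])$ is handled by Bernstein's inequality applied to variables bounded by $\mathrm{width}(\mathcal Z)$ with variance at most $\sigma^2$, producing the $\sigma\sqrt{\ln(1/\delta)/N}$ term. A union bound over the four high-probability events contributes $\delta$.

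The main obstacle will be the second step: transferring control of $\mathcal Z$ from the contaminated $Y$-empirical distribution to the population quantiles of $P$. The shortest-interval functional is not monotone in its inputs, so one cannot invoke DKW directly. Instead I would use a two-sided sandwich: if $\mathcal Z$ misses too much of the bulk of $P$, too few clean $Y_i$'s fall in it to meet the count threshold; if $\mathcal Z$ is much wider than the true $(1-5\alpha)$-quantile interval of $P$, then a strictly shorter interval obtained by shifting to the population quantiles would also contain enough $Y_i$'s with high probability, contradicting minimality. Making this sandwich tight with the right $\sqrt{\alpha}$ slack is what drives the leading $\sigma\sqrt{\epsilon}$ rate, and is the only non-routine estimate in the argument.
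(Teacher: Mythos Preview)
The paper does not prove this statement at all: it is quoted verbatim as a result from \cite{prasad2019unified} and used as a black box in the regret analysis of \textsc{Byzantine-Proof Multi-Agent UCB}. There is therefore nothing in the paper to compare your proposal against.

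That said, your sketch is essentially the standard route to results of this type (and close in spirit to the arguments in the cited source): use one half of the sample to localize a short high-mass interval via counting/Bernstein plus Chebyshev, then on the other half split the error into truncation bias, adversarial pull, and stochastic fluctuation. Two small remarks. First, your final rate is $O(\sigma\sqrt{\alpha})$ with $\alpha=\max(\epsilon,\ln(1/\delta)/N)$, which yields $\sigma\sqrt{\epsilon}+\sigma\sqrt{\ln(1/\delta)/N}$; this is the dimensionally correct form, and the $\sqrt{\sigma\ln(1/\delta)/N}$ appearing in the paper's statement is almost certainly a transcription slip rather than something your argument should reproduce. Second, the ``main obstacle'' you flag---the two-sided sandwich controlling the shortest-interval functional---is indeed the crux, and your description of how to handle it (too few clean $Y_i$'s if $\mathcal Z$ misses the bulk; a shorter competitor if $\mathcal Z$ is too wide) is the right idea, though carrying it out with the precise slack in the definition of $\mathcal Z$ requires some care with the DKW constants.
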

Using these results, we can design an algorithm for the byzantine cooperative bandit problem, as described next.
\subsection{Byzantine-Proof Multi-Agent UCB}
The byzantine-proof algorithm works for an agent $m$ by making a conservative assumption that \text{all} agents (including itself) are byzantine, and constructing a robust mean estimator under this assumption. It then uses this estimator and an associated upper confidence bound (UCB) to choose an arm, similar to the single-agent UCB algorithm. The assumption of all agents being byzantine primarily is made to aid in the analysis of the algorithm: from the perspective of any agent, if all $M$ agents are byzantine, then it can expect to obtain approximately $O(TM\epsilon)$ corrupted messages. If however, only a fraction $f < 1$ of agents are byzantine, then it can expect to obtain approximately $O(TMf\epsilon)$ corrupted messages, which would imply that (in expectation), all $M$ agents are byzantine with $\epsilon' = f\epsilon$. This information can be incorporated at runtime as well, and hence we proceed with the conservative assumption. Moreover, if we overestimate $\epsilon$ during initalization, the performance of the algorithm will remain unchanged, since the robust mean estimator for any value $\epsilon$ is also applicable to any other mixture proportion $\epsilon' < \epsilon$ without any degradation in performance.

In this setting, for the our description of the algorithm, we present both the algorithm and the message-passing protocol in the same setup, unlike the previous case since there is no explicit distinction to be made between the decision-making algorithm and the message-passing algorithm, as they both operate together. The complete algorithm for all agents is described in Algorithm~\ref{alg:bp_multi_ucb}.
\begin{algorithm}[t]
\caption{\textsc{Byzantine-Proof Multi-Agent UCB}}
\label{alg:bp_multi_ucb}
\begin{algorithmic}[1] %[1] enables line numbers
\STATE \textbf{Input}: Agent $m$, arms $k \in [K]$, mean estimator $\hat\mu_R(n, \delta)$
\STATE Set $S^m_k = \phi\ \forall k \in [K]$, $Q_m(t) = \phi$.
\FOR{For $t \in [T]$}
\IF{$t \leq K$}
\STATE $A_{m, t} = t$.
\ELSE
\FOR{Arm $k \in [K]$}
\STATE $\hat\mu^{(m)}_k = \hat\mu_R(S^m_k, 1/t^2)$.
\STATE $\text{UCB}_k^{(m)}(t) = \sigma\sqrt{\epsilon} + \sqrt{\frac{\sigma\ln(1/\delta)}{|S^m_k(t)|}}$.
\ENDFOR
\STATE $A_{m, t} = \arg\max_{k \in [K]} \left\{\hat\mu_k ^{(m)}(t) + \text{UCB}_k^{(m)}(t)\right\}$.
\ENDIF
\STATE $X_{m, t} = \textsc{Pull}(A_{m, t})$.
\STATE $S_{A_{m,t}}^m = S_{A_{m,t}}^m \cup \{X_{m, t}\}$
\STATE $Q_m(t)$ = \textsc{PruneDeadMessages}($Q_m(t)$).
\STATE $Q_m(t) = Q_m(t) \cup \{\left\langle m, t, \gamma, A_{m, t}, X_{m, t}\right\rangle\}$.
\STATE Set $l =l -1 \ \forall \langle m',t', a', x'\rangle$ in $Q_m(t)$.
\FOR{Each neighbor $m'$ in $\mathcal N_1(m)$}
\STATE \textsc{SendMessages}$(m, m', Q_m(t))$.
\ENDFOR
\STATE $Q_m(t+1) = \phi$.
\FOR{Each neighbor $m'$ in $\mathcal N_1(m)$}
\STATE $Q' = $\textsc{ReceiveMessages}$(m',m)$
\STATE $Q_m(t+1)$ = $Q_m(t+1) \cup Q'$.
\ENDFOR
\FOR{$\langle m',t',a',x'\rangle \ \in Q_m(t+1)$}
\STATE $S_{a'}^m = S_{a'}^m \cup \{x'\}$.
\ENDFOR
\ENDFOR
\end{algorithmic}
\end{algorithm}

\textbf{Algorithm Description}. We will consider an agent $m \in [M]$. The agent maintains a set of rewards $S_k^m$ for each arm $k$, which it updates with the reward samples it obtains from messages from other agents, and its own rewards. At every trial, for each arm, it uses the robust mean estimator $\hat\mu_R$ to compute the trimmed mean of all the reward samples present in $S_k^m$, and uses the number of samples to estimate an upper confidence bound as well. It then chooses the arm with the largest sum of the (robust) upper confidence bound. After choosing an arm, it updates its set $S_k^m$ and sends messages to all its neighbors, similar to the previous algorithm. 
\begin{theorem}
If all agents $m \in [M]$ run algorithm~\ref{alg:bp_multi_ucb} with mean estimator $\hat\mu_R$, then for $\epsilon < \Delta_{\min}/2\sigma$, the group regret after $T$ iterations obeys the following upper bound.
\begin{multline*}
    R_\mathcal G(T) \leq \bar\chi\left(\mathcal G_\gamma\right)\left(\sum_{k : \Delta_k > 0} \frac{4\sigma^2}{(\Delta_k - 2\sigma\sqrt{\epsilon})^2}\right)\ln T +\\ \left(3M + \gamma\chi\left({\mathcal G_\gamma}\right)\left(M-1\right)\right)\left(\sum_{k : \Delta_k > 0}\Delta_k\right).
\end{multline*}
Here, $\bar\chi(\cdot)$ refers to the minimum clique number.
\end{theorem}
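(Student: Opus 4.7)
The plan is to mirror the proof structure used for the private theorem: partition $\mathcal G_\gamma$ into $\bar\chi(\mathcal G_\gamma)$ cliques, bound the regret of each clique separately, and sum. The only substantive change is to swap the Laplace-noise concentration for the trimmed-mean concentration of Theorem~\ref{thm:robust_confidence_bound}, carefully tracking how the additive bias $\sigma\sqrt{\epsilon}$ propagates through the standard UCB argument. Fix a clique $C \subseteq \mathcal G_\gamma$; since any two agents in $C$ are at distance at most $\gamma$ in $\mathcal G$, messages within $C$ arrive with delay at most $\gamma$. Hence for each $m \in C$ and each arm $k$, the reward buffer $S_k^m(t)$ contains every pull of $k$ made by agents of $C$ at times at most $t-\gamma$, so $|S_k^m(t)|$ is within an additive $\gamma|C|$ of the clique-aggregate count $\sum_{m'\in C} n_k^{m'}(t)$. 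Summed over the minimum clique cover, this delay slack is the source of the $\gamma\chi(\mathcal G_\gamma)(M-1)$ term in the bound.

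For a suboptimal arm $k$, I would invoke the usual UCB decomposition: agent $m\in C$ pulls $k$ at time $t$ only if either (i) the robust UCB of $k^*$ under-covers its mean, (ii) the robust UCB of $k$ over-covers, or (iii) the sample count is too small for the confidence intervals to separate. Applying Theorem~\ref{thm:robust_confidence_bound} with failure probability $\delta = 1/t^2$ yields that on the success events $|\hat\mu_R - \mu_k| \leq \sigma\sqrt{\epsilon} + \sqrt{\sigma\ln(t)/|S_k^m(t)|}$. Plugging this into the UCB selection inequality, together with the hypothesis $\epsilon < \Delta_{\min}/(2\sigma)$ (which keeps the denominator positive), shows that (iii) can hold only so long as $|S_k^m(t)| \leq 4\sigma^2\ln T /(\Delta_k - 2\sigma\sqrt{\epsilon})^2$. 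Summing the tail probabilities of (i) and (ii) over $t$ via $\sum_{t\geq 1} t^{-2} \leq \pi^2/6$ and over arms contributes the $3M\sum_k \Delta_k$ residual, while the sample-count bound, summed across the $\bar\chi(\mathcal G_\gamma)$ cliques, delivers the logarithmic term of the theorem.

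The main obstacle will be handling the within-clique aggregation cleanly: each agent in $C$ selects arms independently, and the buffers $|S_k^m(t)|$ for distinct agents evolve with staggered delays of up to $\gamma$. I would need to argue that once the clique-aggregate count of pulls of $k$ crosses the threshold above, every agent in $C$ ceases to select $k$ within $\gamma$ additional rounds, which is what converts the single-agent UCB count bound into a clique-wide statement. A secondary technical wrinkle is that $\hat\mu_R$ randomly splits its $2N$ input samples into two halves to estimate the trimming interval, inducing across-round dependencies absent from the private Laplace analysis; these must be absorbed into the confidence bound of Theorem~\ref{thm:robust_confidence_bound} without degrading its rate. Once these two points are in hand, the remainder is routine bookkeeping that assembles the delay slack and the failure-probability residual into the stated $3M + \gamma\chi(\mathcal G_\gamma)(M-1)$ offset.
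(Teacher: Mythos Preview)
Your proposal is correct and follows essentially the same route as the paper: a minimum clique cover of $\mathcal G_\gamma$, the three-event UCB split, Theorem~\ref{thm:robust_confidence_bound} with $\delta = 1/t^2$ to control events (i)--(ii), and the delay-slack lower bound $|S_k^m(t)| \geq N_k^{\mathcal C}(t) - (M-1)(\gamma-1)$ to dispatch (iii) once the clique-aggregate count exceeds $4\sigma^2\ln T/(\Delta_k - 2\sigma\sqrt{\epsilon})^2$. The paper does not engage with your second wrinkle about the random sample split inside $\hat\mu_R$; it simply invokes Theorem~\ref{thm:robust_confidence_bound} as a black box at each round, so your extra caution there is unnecessary for matching the paper's argument.
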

\begin{proof}(Sketch). We proceed in a manner similar to the previous regret bound, by decomposing the group regret into a sum of clique-wise group regrets. We then construct similar UCB-like ``exploration'' vs ``exploitation'' events, and bound the probability of pulling an incorrect arm under these events using the confidence bounds of the robust mean estimator coupled with the worst-case delay within the network. The full proof is in the Appendix.
\end{proof}
\begin{remark}[Deviation from Optimal Rates]
When comparing the group regret bound to the optimal rate achievable in the single-agent case, there is an additive constant that arises from the delay in the network. Identical to the differentially private algorithm, this additive constant reduces to the constant corresponding to $MT$ individual trials of the UCB algorithm when information flows instantly throughout the network, i.e. $\mathcal G$ is connected. Additionally, we observe identical dependencies on other graph parameters in the leading term as well, except for the modified denominator $(\Delta_k - 2\sigma\sqrt\epsilon)^2$. This term arises from the inescapable bias that the mixture distribution $Q$ introduces into estimation, and~\citep{prasad2019unified} show that this bias is optimal and unimprovable in general. This can intuitively be explained by the fact that after a certain $\epsilon$, there will be so many corrupted samples that it would be impossible to distinguish the best arm from the next best arm, regardless of the number of samples (since the observed, noisy distributions for each arms will become indistinguishable). We must also note that the dependence on $\gamma$ on the asymptotic optimality remains identical to the previous algorithm, and when all agents are truthful ($\epsilon = 0$) and $\gamma = \textsf{diameter}(\mathcal G)$, this algorithm obtains optimal multi-agent performance, matching the lower bound.
\end{remark}
\begin{remark}[Simultaneous Privacy and Robustness]
A natural question to ask is whether both privacy with respect to other agents' rewards and robustness with respect to byzantine agents can be simultaneously achieved. This is a non-trivial question, since the fundamental approaches to both problems are conflicting. First, we must notice that to achieve differential privacy without damaging regret, one must convey summary statistics (such as the mean or sum of rewards) directly, with appropriately added noise. However, for robust estimation in byzantine-tolerance, communicating individual rewards is essential, to allow for an appropriate rate of concentration of the robust mean. These two approaches seem contradictory at a first inquiry, and hence we leave this line of research for future work.
\label{remark:doing_both}
\end{remark}
In the next section, we decribe the experimental comparisons of our proposed algorithms with existing benchmark algorithms.

\begin{figure*}[t]
  \includegraphics[width=\linewidth]{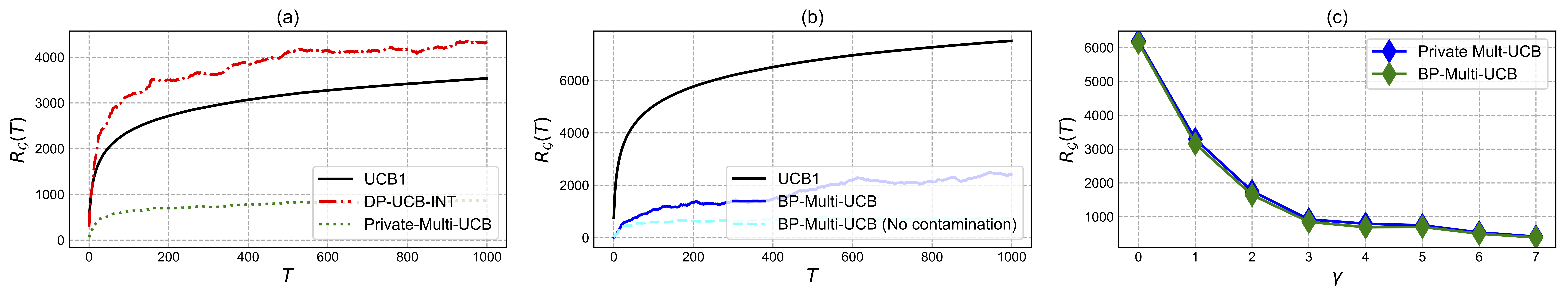}
  \caption{Experimental comparisons on random graphs. Each figure is constructed by averaging over 100 trials.}
  \label{fig:main}
\end{figure*}
\section{Experiments}
For our experimental setup, we consider rewards drawn from randomly initialized Bernoulli distributions, with $K=5$ as our default operating environment. We initialize the connectivity graph $\mathcal G$ as a sample from a random Erdos-Renyi~\citep{bollobas2003mathematical} family with edge probability $p=0.1$, and set the communication parameter $\gamma = \textsf{diameter}(\mathcal G)/2$. We describe the experiments and associated benchmark algorithms individually.

\textbf{Testing Private Multiagent UCB}. In this setting, we compare the group regret $R_\mathcal G$ of $M=200$ agents over $100$ randomly initialized trials of the above setup. The benchmark algorithms we compare with are (a) single-agent UCB1~\citep{auer2002finite} running individually on each agent, and (b) DP-UCB-INT of~\citep{tossou2016algorithms} running individually on each agent, under the exact setup for $\epsilon$ and $\delta$ as described in~\citep{tossou2016algorithms}. We choose this benchmark as it is the state-of-the-art in the single-agent stochastic bandit case. The results of this experiment are plotted in Figure~\ref{fig:main}(a). We observe that the performance of our algorithm is significantly better than both private benchmarks, however, it incurs higher regret (as expected) than the non-private version.

\textbf{Testing Byzantine-Proof Multiagent UCB}. To test the performance of our byzantine-proof algorithm, we use the same setup as the previous case, with $M=200$ agents, repeated over $100$ randomly initialized trials, except we set the reward probability of each arm to be constrained within $[0.3, 0.7]$, and set the contamination probability as $\epsilon = 10^{-3}$. The benchmark algorithms in this case are (a) the single-agent UCB1~\citep{auer2002finite} running individually on each agent, and (b) the byzantine-proof multiagent UCB with $\epsilon=0$ (since no other work explicitly studies the Huber $\epsilon$-contamination model as used in our setting). The results of this experiment are summarized in Figure~\ref{fig:main}(b). Similar to the previous case, we observe a large improvement in the group regret compared to the single-agent UCB1, and with no contamination, the performance is better.

\textbf{Testing the effect of $\gamma$}. To understand the effect of $\gamma$ on the obtained regret, we repeated the same experiment ($M=200$, 100 trials of randomly generated Erdos-Renyi graphs with $p=0.1$) with our two algorithms and compared their obtained group regret at $T=1000$ trials. We observe a sharp decline as $\gamma$ increases from $1$ to $\textsf{diameter}(\mathcal G)$, and matches the optimal group regret at $\gamma = \textsf{diameter}(\mathcal G)$, as hypothesized by our regret bound. The results of this experiment are summarized in Figure~\ref{fig:main}(c).
\section{Related Work}
Our work relates to a vast body of work across several fields in the sequential decision-making literature. 

\textbf{Multi-Agent Bandit Algorithms}. Multi-agent bandit problems have been studied widely in the context of distributed control~\citep{liu2010decentralized, liu2010distributed2, liu2010distributed, bistritz2018distributed}. In this line of work, the problem is competitive rather than cooperative, where a group of agents compete for a set of arms, and collisions occur when two or more agents select the same arms. Contextual bandit algorithms have also been explored the context of peer-to-peer networks and collaborative filtering over multiple agent recommendations~\citep{korda2016distributed, gentile2014online, gentile2017context, cesa2013gang}, where ``agents'' are clustered or averaged by their similarity in contexts. While some of these problem domains assume an underlying network structure between agents, none assume ``delay'' in the observed feedback between agents, and communication is instantaneous with the knowledge of the graph, making these algorithms effectively single-agent. For the cooperative bandit problem, however, general settings have been investigated in both the stochastic case using a \textit{consensus} protocol~\citep{landgren2016distributed, landgren2016distributed2, landgren2018social} and in the non-stochastic case using a message-passing protocol similar to ours~\citep{cesa2019delay, bar2019individual}. To the best of our knowledge, our work is the first to consider the cooperative bandit problem with differential privacy or byzantine agents.

\textbf{Private Sequential Decision-Making}. Our work uses the immensely influential \textit{differential privacy} framework of Dwork~\citep{dwork2011differential}, that has been the foundation of almost all work in private sequential decision-making. For the bandit setting, there has been significant research interest in the~\textit{single-agent} case, including work on private stochastic bandit algorithms~\citep{tossou2016algorithms, mishra2015nearly, thakurta2013nearly}, linear bandit algorithms~\citep{shariff2018differentially} and in the nonstochastic case~\citep{tossou2017achieving}. In the competitive multi-agent stochastic bandit case, Tossou and Dimitrakakis~\citep{tossou2015differentially} provide a UCB-based algorithm based on Time-Division Fair Sharing (TDFS). 

\textbf{Robust Sequential Decision-Making}. Robust estimation has a rich history in the bandit literature. Robustness to heavy-tailed reward distributions has been extensively explored in the stochastic multi-armed setting, from the initial work of Bubeck~\textit{et al.}\citep{bubeck2013bandits} that proposed a UCB algorithm based on robust mean estimators, to the subsequent work of~\citep{yupure, vakili2013deterministic, dubey2019thompson} on both Bayesian and frequentist algorithms for the same. Contamination in bandits has also been explored in the stochastic \textit{single-agent} case, such as the work of~\citep{altschuler2019best} in best-armed identification under contamination and algorithms that are jointly optimal for the stochastic and nonstochastic case~\citep{seldin2014one}, that uses a modification of the popular EXP3 algorithm. Our work builds on the work in robust statistics of Huber~\citep{huber2011robust} and Lugosi and Mendelson~\citep{lugosi2019robust}.

To the best of our knowledge, our work is the first to consider privacy and contamination in the context of the multi-agent cooperative bandit problem, with the additional benefit of being completely decentralized.
\section{Conclusion}
In this paper, we discussed the cooperative multi-armed stochastic bandit problem under two important practical settings -- private communication and the existence of byzantine agents that follow an $\epsilon$-contamination model. We provided two algorithms that are completely decentralized, and provide optimal group regret guarantees when run with certain parameter settings. Our work is the first to investigate real-world scenarios of cooperative decision-making, however, it does leave many open questions for future work.

First, we realise that achieving both differential privacy and byzantine-tolerance simultaneously is non-trivial, even in the stochastic case: differential privacy requires the computation of summary statistics (such as the sum or average of rewards) in order to provide feasible regret, however, for byzantine-tolerance we require the use of robust estimators, that explicitly require individual reward samples to compute, making their combination a difficult problem, which can be investigated in future work. Next, we analysed a very specific setting for byzantine agents, which can be generalized to adversarial corruptions, as done in the single-agent case. Finally, extensions of the stochastic multi-armed setting to contextual settings is also an important direction of research that our work opens up in multi-agent sequential decision-making.

%%%%%%%%%%%%%%%%%%%%%%%%%%%%%%%%%%%%%%%%%%%%%%%%%%%%%%%%%%%%%%%%%%%%%%%%%%%%%%%%%%%%%%%%%%%%%%%%%%%%%%%%%
%% bibliography: see CFP for number of permitted pages

\bibliographystyle{ACM-Reference-Format}  % do not change this line!
\bibliography{sample-bibliography}  % put name of your .bib file here
\newpage
\onecolumn
\section*{Appendix}
\begin{theorem}
If all agents $m \in [M]$ each follow the \textsc{Private Multi-Agent UCB} algorithm with the messaging protocol described in Algorithm~\ref{alg:privacy_protocol}, then the group regret incurred after $T$ trials obeys the following upper bound.
\begin{align*}
    R_{\mathcal G}(T) \leq \sum_{k : \Delta_k > 0} \chi(\mathcal G_\gamma)\left(\frac{8\sigma^2\ln T}{\Delta_k}\right) + \left(\sum_{k : \Delta_k > 0} \Delta_k\right)\left(\chi(\mathcal G_\gamma)\left(M\gamma + 2\right) + M\left(\frac{1}{\epsilon} + \zeta(1.5)\right)\right).
\end{align*}
Here, $\chi$ is the clique number, and $\zeta$ is the Riemann zeta function.
\end{theorem}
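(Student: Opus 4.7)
The plan is to lift the classical single-agent UCB regret argument to the cooperative, noisy, delayed setting by exploiting the fact that within a clique of the power graph $\mathcal G_\gamma$, every pair of agents can exchange messages with delay at most $\gamma$. The starting point is the standard decomposition $R_{\mathcal G}(T) = \sum_{k:\Delta_k>0}\Delta_k\,\mathbb E[n_k(T)]$, where $n_k(T)=\sum_{m\in[M]} n^m_k(T)$. Fix a minimum clique cover $\mathcal C_1,\ldots,\mathcal C_{\bar\chi(\mathcal G_\gamma)}$ of $\mathcal G_\gamma$; it suffices to bound $\sum_{m\in\mathcal C}\mathbb E[n^m_k(T)]$ for each suboptimal arm $k$ and each clique $\mathcal C$, then sum over cliques.

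Fix a clique $\mathcal C$. I would mimic the Auer--Cesa-Bianchi--Fischer UCB proof, but track the \emph{clique-wide} number of pulls $N^{\mathcal C}_k(t)=\sum_{m\in\mathcal C} n^m_k(t)$ and the \emph{clique-wide} empirical mean, because every agent in $\mathcal C$ sees every other agent's private message within $\gamma$ rounds. For an agent $m\in\mathcal C$ pulling suboptimal arm $k$ at time $t$, the UCB event implies that either (i) the noisy clique estimate $\hat\mu^m_k(t)$ over-estimates $\mu_k$ by the confidence radius, (ii) the noisy clique estimate of the optimal arm $k^\star$ under-estimates $\mu_{k^\star}$, or (iii) the deterministic count is too small for the confidence bound to separate the two arms. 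Events (i) and (ii) are controlled by a union of (a) Hoeffding's inequality on the true mean of the bounded rewards received so far (which, summed over all agents in $\mathcal C$, concentrates at rate $(N^{\mathcal C}_k(t))^{-1/2}$), and (b) a Laplace tail bound on the injected noise. Since the Laplace noise added to agent $m$'s broadcast of arm $k$ has scale $n^m_k(t)^{v/2-1}$ with $v\in(1,1.5)$, its contribution to the clique-wide noisy mean has variance summable in $n^m_k(t)$, and Jensen / sub-exponential concentration produces a deviation of order $(N^{\mathcal C}_k(t))^{-1/2}$ that can be absorbed into the UCB radius up to constants. Solving $(iii)$ gives the $8\ln T/\Delta_k^2$ threshold on $N^{\mathcal C}_k(t)$, contributing the $\bar\chi(\mathcal G_\gamma)\cdot 8\ln T/\Delta_k$ leading term after summing over cliques.

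The residual constant terms arise from three distinct sources that I would accumulate separately. First, the $K\lceil 1/\epsilon\rceil$ initialisation phase in Algorithm~\ref{alg:private_multi_ucb} and the infrequent update schedule of Lemma~\ref{lem:update_interval} (which uses only an $\epsilon$-fraction of rounds to release new means) produce a per-agent additive offset of $1/\epsilon+\zeta(1.5)$ — the $\zeta(1.5)$ term is what remains after bounding $\sum_{t} t^{-v}$ with $v\in(1,1.5)$ summed by the Laplace tail used when accounting for stale releases. Summed over agents this gives the $M(1/\epsilon+\zeta(1.5))\sum_k\Delta_k$ contribution. Second, within each clique, because information propagates with delay up to $\gamma$, the clique-wide count used in the UCB at any single agent can lag the true $N^{\mathcal C}_k(t)$ by at most $(|\mathcal C|-1)\gamma\le(M-1)\gamma$ samples, producing the $M\gamma$ constant; the $+2$ comes from the usual $1+\pi^2/3$-style sum over $t$ in the union bound, rounded up. Third, one must sum this per-clique constant over $\chi(\mathcal G_\gamma)$ cliques, giving the $\chi(\mathcal G_\gamma)(M\gamma+2)$ factor.

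The main obstacle is step (ii) above: the clique-wide noisy estimator is a sum of a bounded empirical mean \emph{plus} a sum of Laplace variables of \emph{heterogeneous} scales $n^m_k(t)^{v/2-1}$, and $n^m_k(t)$ is a random stopping-time-like quantity rather than a deterministic count. I would handle this by conditioning on the per-agent counts and using a peeling / worst-case argument: the Laplace sum is sub-exponential, and its deviation at the scale of $\sqrt{\ln T/N^{\mathcal C}_k(t)}$ can be controlled uniformly over $t\in[T]$ via a time-uniform union bound provided $v>1$, exactly matching the regime imposed by Lemma~\ref{lem:update_interval}. Combining this high-probability event with the standard Chernoff bound for the bounded rewards, and with the delay slack of at most $\gamma$ per pair of agents inside the clique, closes the argument and yields the stated inequality.
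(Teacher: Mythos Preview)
Your proposal follows essentially the same route as the paper: a minimum clique cover of $\mathcal G_\gamma$, the three-event UCB decomposition within each clique, Hoeffding on the bounded rewards plus a Laplace tail bound to control the noisy estimator, and the delay slack $N_k^m(t)\ge N^{\mathcal C}_k(t)-M\gamma$ to pass from clique-wide counts to the count visible to any single agent. The paper uses the specific Dwork--Roth concentration inequality $\Pr\bigl(\lvert\sum_i Y_i\rvert\ge \ln(1/\omega)\sqrt{8\sum_i b_i^2}\bigr)\le\omega$ for the sum of heterogeneous Laplace variables rather than a generic sub-exponential or peeling argument, but this is the same mechanism you describe.

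One minor inaccuracy in your bookkeeping: the $\zeta(1.5)$ term does \emph{not} arise from the privacy parameter $v\in(1,1.5)$ or from stale releases. In the paper the failure probability of each of events (A) and (B) is driven to $2t^{-3.5}$ by choosing $\omega=t^{-3.5}$ in the Laplace bound; after the standard UCB union bound over the $O(t^2)$ possible pairs of counts this becomes $4t^{-1.5}$, and $\sum_{t\ge 1} t^{-1.5}=\zeta(1.5)$, summed once per agent in the clique. The exponent $1.5$ here is a free design choice unrelated to $v$. This does not change the structure of your argument, only the attribution of the additive constants.
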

\begin{proof}
Let a clique covering of $\mathcal G_\gamma$ be given by $\bf{C}_\gamma$. We begin by decomposing the group regret.
\begin{align}
    R_{\mathcal G}(T) &= \sum_{m=1}^M R_m(T) \\
    &\leq \sum_{\mathcal C \in \bf{C}_\gamma} \sum_{m \in \mathcal C} \sum_{k=1}^K \Delta_{k}\mathbb E[n_m^k(T)] \\
    &= \sum_{\mathcal C \in \bf{C}_\gamma}\sum_{k=1}^K \Delta_k \left(\sum_{m\in\mathcal C}\sum_{t=1}^T \mathbb P\left(A_{m, t} = k\right)\right) \\ \intertext{Consider the cumulative regret $R_{\mathcal C}(T)$ within the clique $\mathcal C$. For some time $T^k_{\mathcal C}$, assume that each agent has pulled arm $k$ for $\eta_m^k$ trials, where $\eta_{\mathcal C}^k = \sum_{m \in \mathcal C}\eta_m^k$. Then,}
    R_{\mathcal C}(T)&\leq \sum_{k=1}^K \Delta_k \left(\eta_{\mathcal C}^k + \sum_{m \in \mathcal C} \sum_{t=T^k_\mathcal C}^T \mathbb P\left(A_{m, t} = k, N_k^\mathcal C(t) \geq \eta^k_\mathcal C \right)\right). \label{eqn:regret_bare}
\end{align}
Here $N_k^{\mathcal C}(t)$ denotes the number of times arm $k$ has been pulled by any agent in $\mathcal C$. We now examine the probability of agent $m \in \mathcal C$ pulling arm $k$. First note that the empirical mean of any arm can be given by the latest messages accumulated by agent $m$ until that time. This can be given by the following, for any arm $k \in [K]$.
\begin{align}
    \hat\mu_k^m(t-1) &= \sum_{m' \in \mathcal N(m)} \left(\frac{\sum_{u=1}^{n_{m'}^k(t-d(m, m'))} X^k_{m', u}}{n_{m'}^k(t-d(m, m'))} + Y^k_{m'} \right) \\ \intertext{Here, $Y^k_{m'} \sim \mathcal L\left( n_{m'}^k(t-d(m, m'))^{v/2-1}\right)$. For convenience, let's denote the noise-free mean $\hat\mu_k^m(t-1) - \sum_{m' \in \mathcal N(m)} Y^k_{m'}$ as $Z_k^m(t-1)$, and $N_k^m(t) = n_m^k(t) + \sum_{m' \in \mathcal N(m)} n_{m'}^k(t-d(m, m'))$. Note that an arm is pulled when one of three events occurs:}
    \text{Event (A): } &  Z_k^m(t-1) \leq \mu_* - \sigma\sqrt{\frac{2\ln t}{N_*^m(t)}} -  \sum_{m' \in \mathcal N(m)} Y^*_{m'} \\
    \text{Event (B): } &  Z_k^m(t-1) \geq \mu_k + \sigma\sqrt{\frac{2\ln t}{N_k^m(t)}} + \sum_{m' \in \mathcal N(m)} Y^k_{m'} \\
    \text{Event (C): } &  \mu_* \leq \mu_k + 2\sigma\sqrt{\frac{2\ln t}{N_k^m(t)}}
\end{align}
We will first analyse events $(A)$ and $(B)$. We know from Dwork and Roth~\citep{dwork2014algorithmic} that for any $N$ random variables $Y_i \sim \mathcal L(b_i)$,
\begin{equation}
    \text{Pr}\left(\left|\sum_i Y_i\right| \geq \ln\left(\frac{1}{\omega}\right)\sqrt{8\sum_{i}b_i^2}\right) \leq \omega
\end{equation}
For some $\omega \in (0, 1)$, let $h_{k, m}(t) = \ln\left(\frac{1}{\omega}\right)\sqrt{8\sum_{m' } (n_{m'}^k(t-d(m, m'))^{v-2}}$. Let us examine the probability of event (A) occuring.
\begin{align}
    \text{Pr}(B) &= \text{Pr}\left(Z_k^m(t-1) \geq \mu_k + \sigma\sqrt{\frac{2\ln t}{N_k^m(t)}} + \sum_{m' \in \mathcal N(m)} Y^k_{m'}\right)\\
    &= \text{Pr}(\hat\mu_k^m(t-1) - Z_k^m(t-1) \geq h_{k, m}(t) \And Z_k^m(t-1) \geq \mu_k + \sigma\sqrt{\frac{2\ln t}{N_k^m(t)}} - h_{k, m}(t)) \\
    &\leq \text{Pr}\left(\hat\mu_k^m(t-1) - Z_k^m(t-1) \geq h_{k, m}(t)\right) +  \text{Pr}\left( Z_k^m(t-1) \geq \mu_k + \sigma\sqrt{\frac{2\ln t}{N_k^m(t)}} - h_{k, m}(t)\right) \\
    &\leq \omega + \text{Pr}\left( Z_k^m(t-1) \geq \mu_k + \sigma\sqrt{\frac{2\ln t}{N_k^m(t)}} - h_{k, m}(t)\right) \\
    &\leq \omega + \exp\left(-2N_k^m(t)\left(\sigma\sqrt{\frac{2\ln t}{N_k^m(t)}} - h_{k, m}(t)\right)^2\right) \label{eqn:step1} \\
    &= t^{-3.5} + \exp\left(-2N_k^m(t)\left(\sigma\sqrt{\frac{2\ln t}{N_k^m(t)}} - 3.5\ln t\sqrt{8\sum_{m' } (n_{m'}^k(t-d(m, m'))^{v-2}}\right)^2\right) \label{eqn:step2}\\
    &\leq t^{-3.5} + \exp\left(-2N_k^m(t)\left(\sigma\sqrt{\frac{2\ln t}{N_k^m(t)}} - 3.5\ln t\sqrt{8M}\left(\epsilon^{-1} - \gamma\right)^{v/2-1}\right)^2\right) \label{eqn:step3}\\
     &\leq 2t^{-3.5} \label{eqn:step4}
\end{align}
Here, we use Hoeffding's Inequality in Equation~(\ref{eqn:step1}), and the fact that $v \in (1, 1.5)$ in Equation~(\ref{eqn:step3}) and that $n_m^k(t) \geq \epsilon^{-1}$, and choose $\sum_{m'} n^k_{m'}(t-d(m, m'))$ such that $$\exp\left(-2N_k^m(t)\left(\sigma\sqrt{\frac{2\ln t}{N_k^m(t)}} - 3.5\ln t\sqrt{8M}\left(\epsilon^{-1} - \gamma\right)^{v/2-1}\right)^2\right) \leq t^{-3.5}.$$ 
We see that as long as $N_k^{\mathcal C}(t) \geq \frac{(\epsilon^{-1} -\gamma)^{v/2 -1}(2\sqrt{2}\sigma - \sqrt{3.5})}{\sqrt{24M\ln t}} + \frac{|\mathcal C|}{\epsilon}$, this is true, following the fact that $\gamma < 1/\epsilon$ and the argument presented in~\citep{tossou2015differentially} for the single agent case. We can repeat the same process for the event (A). Finally, let us analyse event (C). For (C) to be true, we must have the following to be true.
\begin{align}
    N_k^m(t) < \frac{8\sigma^2\ln t}{\Delta_k^2}.
\end{align}
Since $N_k^m(t) \geq N_k^{\mathcal C} - M\gamma$, we see that this event does not happen for any agent $m \in \mathcal C$ if we set $$\eta_k^{\mathcal C} = \ceil*{\max\left\{\frac{8\sigma^2\ln T}{\Delta_k^2} + M\gamma, \frac{(\epsilon^{-1} -\gamma)^{v/2 -1}(2\sqrt{2}\sigma - \sqrt{3.5})}{\sqrt{24M\ln t}} + \frac{|\mathcal C|}{\epsilon}\right\}}.$$
Next, we should notice that the second term decreases as $t$ increases. We therefore, can decompose the regret of the entire clique following the argument in~\citep{tossou2015differentially} identically.
\begin{align}
    R_{\mathcal C}(T)&\leq \sum_{k=1}^K \Delta_k \left(\eta_{\mathcal C}^k + \frac{|\mathcal C|}{\epsilon} + \sum_{m \in \mathcal C} \sum_{t=T^k_\mathcal C}^T \mathbb P\left(A_{m, t} = k, N_k^\mathcal C(t) \geq \eta^k_\mathcal C \right)\right) \\
    &\leq \sum_{k : \Delta_k > 0} \Delta_k \left(\frac{8\sigma^2\ln T}{\Delta_k^2} + M\gamma + 2 + \frac{|\mathcal C|}{\epsilon} + \sum_{m \in \mathcal C} \sum_{t=1}^T 4t^{-1.5}\right) \\
    &\leq \sum_{k : \Delta_k > 0} \Delta_k \left(\frac{8\sigma^2\ln T}{\Delta_k^2} + M\gamma + 2 + \frac{|\mathcal C|}{\epsilon} + \sum_{m \in \mathcal C} \sum_{t=1}^T 4t^{-1.5}\right) \\
    &\leq \sum_{k : \Delta_k > 0} \Delta_k \left(\frac{8\sigma^2\ln T}{\Delta_k^2} + M\gamma + 2 + |\mathcal C|\left(\frac{1}{\epsilon} + \zeta(1.5)\right)\right) \\ \intertext{Summing over all cliques $\mathcal C \in \bf{C}_\gamma$, we get}
    R_{\mathcal G}(T) &\leq \sum_{C \in \mathcal C_\gamma}\sum_{k : \Delta_k > 0} \Delta_k \left(\frac{8\sigma^2\ln T}{\Delta_k^2} + M\gamma + 2 + |\mathcal C|\left(\frac{1}{\epsilon} + \zeta(1.5)\right)\right).
\end{align}
Choosing $\bf{C}_\gamma$ to be the minimal clique partition of $\mathcal G_\gamma$,we obtain the final form of the bound.
\end{proof}
\begin{theorem}
If all agents $m \in [M]$ run algorithm~\ref{alg:bp_multi_ucb} with mean estimator $\hat\mu_R$, then for $\epsilon < \Delta_{\min}/2\sigma$, the group regret after $T$ iterations obeys the following upper bound.
\begin{align*}
    R_\mathcal G(T) \leq \chi\left(\mathcal G_\gamma\right)\left(\sum_{k : \Delta_k > 0} \frac{4\sigma^2}{(\Delta_k - 2\sigma\sqrt{\epsilon})^2}\right)\ln T +\left(3M + \gamma\chi\left({\mathcal G_\gamma}\right)\left(M-1\right)\right)\left(\sum_{k : \Delta_k > 0}\Delta_k\right).
\end{align*}
Here, $\chi(\cdot)$ refers to the minimum clique number.
\end{theorem}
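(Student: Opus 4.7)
The plan is to mirror the proof of the previous theorem on \textsc{Private Multi-Agent UCB} almost line by line, with the trimmed-mean confidence bound of Theorem~\ref{thm:robust_confidence_bound} playing the role that Hoeffding's inequality played there. First I would fix a minimum clique partition $\mathbf{C}_\gamma$ of the power graph $\mathcal G_\gamma$, so that any two agents in the same clique $\mathcal C \in \mathbf{C}_\gamma$ lie within the communication radius $\gamma$ of one another. The group regret decomposes as
\begin{equation*}
R_{\mathcal G}(T) \;\leq\; \sum_{\mathcal C \in \mathbf{C}_\gamma}\sum_{m \in \mathcal C}\sum_{k : \Delta_k > 0}\Delta_k\, \mathbb{E}[n_m^k(T)],
\end{equation*}
so it suffices to bound the per-clique quantity $\sum_{m \in \mathcal C}\mathbb{E}[n_m^k(T)]$ for each suboptimal arm $k$ and then sum using $|\mathbf{C}_\gamma| = \chi(\mathcal G_\gamma)$ and $\sum_{\mathcal C}|\mathcal C| = M$.

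Next I would fix a clique $\mathcal C$ and a suboptimal arm $k$, writing $N_k^{\mathcal C}(t)$ for the total number of pulls of arm $k$ by the whole clique up to time $t$. Because every pair of agents in $\mathcal C$ is at most $\gamma$ hops apart, the reward pool $S_k^m$ seen by any agent $m \in \mathcal C$ at time $t$ satisfies $|S_k^m(t)| \geq N_k^{\mathcal C}(t) - (|\mathcal C|-1)\gamma$; this inequality is ultimately responsible for the $\gamma\chi(\mathcal G_\gamma)(M-1)$ term in the additive constant. I then apply the standard UCB decomposition: a pull of suboptimal arm $k$ by agent $m$ at time $t$ forces at least one of three events to hold, namely (A) the trimmed estimate of arm $k$ overshoots its true mean by more than its UCB width, (B) the trimmed estimate of the optimal arm $k^\ast$ undershoots by more than its UCB width, or (C) the UCB width itself is so large that it cannot separate $\mu_{k^\ast}$ from $\mu_k$. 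Theorem~\ref{thm:robust_confidence_bound} applied with $\delta = 1/t^2$ yields $\mathbb{P}(A), \mathbb{P}(B) \leq 1/t^2$, so summing over $t$ and $m \in \mathcal C$ these events contribute at most a constant of order $|\mathcal C|$ per suboptimal arm, which aggregates to the $3M\sum_k\Delta_k$ piece (with the constant $3$ rather than $2$ absorbing the $K$ initial exploration rounds).

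The core quantitative step is analysing event (C): it fails as soon as $\sigma\sqrt{\epsilon} + \sqrt{\sigma\ln(t^2)/|S_k^m(t)|} \leq \Delta_k/2$, i.e.\ as soon as $|S_k^m(t)| \geq 4\sigma^2 \ln t / (\Delta_k - 2\sigma\sqrt{\epsilon})^2$ after absorbing the $\ln(t^2)=2\ln t$ factor. This is exactly where the hypothesis $\epsilon < \Delta_{\min}/(2\sigma)$ is essential: otherwise the denominator vanishes and the unavoidable $\sigma\sqrt\epsilon$ bias in Theorem~\ref{thm:robust_confidence_bound} makes the top two arms statistically indistinguishable. Combining with the delay inequality, (C) is ruled out for every agent in $\mathcal C$ once $N_k^{\mathcal C}(t) \geq \eta_k^{\mathcal C} := \tfrac{4\sigma^2 \ln T}{(\Delta_k - 2\sigma\sqrt{\epsilon})^2} + (|\mathcal C|-1)\gamma$. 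Plugging this threshold into the Auer--Cesa-Bianchi--Fischer decomposition (Equation~\ref{eqn:regret_bare} applied verbatim) and summing the $t^{-2}$ tails from (A) and (B) yields a per-clique bound whose leading term is $\sum_k 4\sigma^2\ln T/(\Delta_k-2\sigma\sqrt\epsilon)^2$ and whose additive constant is $\sum_k \Delta_k\bigl(3|\mathcal C| + (|\mathcal C|-1)\gamma\bigr)$. Summing over $\mathbf{C}_\gamma$ recovers the claim.

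The main obstacle I anticipate is keeping the delay bookkeeping clean together with the hypotheses on the robust estimator: Theorem~\ref{thm:robust_confidence_bound} nominally requires its two halves of samples (the quantile-selection half and the averaging half) to be i.i.d.\ from the mixture, but the samples pooled into $S_k^m$ arrive with data-dependent delays $d(m,m') \leq \gamma$ and a random sample size. A little care is therefore needed to argue that, after conditioning on the joint filtration generated by the agents' policies and message timings, Theorem~\ref{thm:robust_confidence_bound} still applies to $S_k^m(t)$ and that the $-(|\mathcal C|-1)\gamma$ shift above is genuinely a deterministic worst-case lower bound on $|S_k^m(t)|$. Once this is granted, the rest of the argument is a direct transcription of the UCB template with Hoeffding's width replaced by the robust width $\sigma\sqrt\epsilon + \sqrt{\sigma\ln t/|S_k^m(t)|}$.
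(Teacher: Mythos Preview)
Your proposal is correct and follows essentially the same clique-partition plus three-event UCB decomposition as the paper's own proof. One minor discrepancy worth noting: you use the delay inequality $|S_k^m(t)| \geq N_k^{\mathcal C}(t) - (|\mathcal C|-1)\gamma$, whereas the paper bounds $|S_k^m(t)|$ against the full $\gamma$-neighbourhood and obtains the looser $|S_k^m(t)| \geq N_k^{\mathcal C}(t) - (M-1)\gamma$; summing your per-clique constant $(|\mathcal C|-1)\gamma$ over $\mathbf{C}_\gamma$ yields $(M-\chi(\mathcal G_\gamma))\gamma$ rather than the stated $\chi(\mathcal G_\gamma)(M-1)\gamma$, so to literally ``recover the claim'' you must either observe that $M-\chi \leq \chi(M-1)$ or simply loosen your delay bound to match the paper's.
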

\begin{proof}
Let a clique covering of $\mathcal G_\gamma$ be given by $\bf{C}_\gamma$. We begin by decomposing the group regret.
\begin{align}
    R_{\mathcal G}(T) &= \sum_{m=1}^M R_m(T) \\
    &\leq \sum_{\mathcal C \in \bf{C}_\gamma} \sum_{m \in \mathcal C} \sum_{k=1}^K \Delta_{k}\mathbb E[n_m^k(T)] \\
    &= \sum_{\mathcal C \in \bf{C}_\gamma}\sum_{k=1}^K \Delta_k \left(\sum_{m\in\mathcal C}\sum_{t=1}^T \mathbb P\left(A_{m, t} = k\right)\right) \\ \intertext{Consider the group regret $R_{\mathcal C}(T)$ within the clique $\mathcal C$. For some time $T^k_{\mathcal C}$, assume that each agent has pulled arm $k$ for $\eta_m^k$ trials, where $\eta_{\mathcal C}^k = \sum_{m \in \mathcal C}\eta_m^k$. Then,}
    R_{\mathcal C}(T)&\leq \sum_{k=1}^K \Delta_k \left(\eta_{\mathcal C}^k + \sum_{m \in \mathcal C} \sum_{t=T^k_\mathcal C}^T \mathbb P\left(A_{m, t} = k, N_k^\mathcal C(t) \geq \eta^k_\mathcal C \right)\right). \label{eqn:regret_bare2}
\end{align}
Here $N_k^{\mathcal C}(t)$ denotes the number of times arm $k$ has been pulled by any agent in $\mathcal C$. We now examine the probability of agent $m \in \mathcal C$ pulling arm $k$. Note that an arm is pulled when one of three events occurs:
\begin{align}
    \text{Event (A): } &  \hat\mu_*^m(t-1) \leq \mu_* -  \sigma\sqrt{\epsilon} - \sqrt{\frac{\sigma\ln(1/\delta)}{|S^m_*(t)|}} \\
    \text{Event (B): } &  \hat\mu_k^m(t-1) \geq \mu_k + \sigma\sqrt{\epsilon} + \sqrt{\frac{\sigma\ln(1/\delta)}{|S^m_k(t)|}} \\
    \text{Event (C): } &  \mu^* \leq \mu^k + 2\sigma\sqrt{\epsilon} + 2\sqrt{\frac{\sigma\ln(1/\delta)}{|S^m_k(t)|}} 
\end{align}
Now, let us examine the occurence of event $(C)$:
\begin{align}
  \Delta_k -2\sigma\sqrt{\epsilon} \leq 2\sigma\sqrt{\epsilon} - \sqrt{\frac{\sigma\ln(1/\delta)}{|S^m_*(t)|}} \\
  \implies |S^m_k(t)| \leq \frac{4\sigma^2\ln t}{(\Delta_k - 2\sigma\sqrt\epsilon)^2}
\end{align}
Let us now examine a lower bound for $N_m^k(t) = |S_k^m(t)|$. Let $P_m^k(t)$ denote the set of reward samples obtained by agent $m$ for its own pulls of arm $k$ until time $t$. We know, then that $P_m^k(t) = P_m^k(t-1) \text{ if arm $k$ was pulled at time $t$, and }P_m^k(t) = P_m^k(t-1) \cup \{X_{m, t}\} \text{ otherwise.}$

Additionally, any message from an agent $m' \in \mathcal G$ takes $d(m, m') - 1$ iterations to reach agent $m$. Therefore:
\begin{align}
    S^m_k(t) = P_m^k(t) \cup \left\{ \bigcup_{m' \in \mathcal G \setminus \{m\}} P_m^k\left(t-d(m',m)+1\right)\right\}.
\end{align}
Note that $P_m^k(t)$ and $P_{m'}^{k'}(t')$ are disjoint for all $m \neq m', k, k', t, t'$. Let $n(S)$ denote the cardinality of $S$. Then,
\begin{align}
    n\left(S^m_k(t)\right) = n\left(P_m^k(t)\right) + \left\{ \sum_{m' \in \mathcal G \setminus \{m\}} n\left(P_m^k\left(t-d(m',m)+1\right)\right)\right\}.
\end{align}
Now, in the iterations $t-d(m, m')+1$ to $t$, agent $m'$ can pull arm $k$ at most $d(m, m') - 1$ times and at least 0 times. Therefore,
\begin{align}
    N_{\mathcal G_\gamma(m)}^k(t) \geq n\left(S^m_k(t)\right) &\geq \max\left\{0, N_{\mathcal G_\gamma(m)}^k(t) - \sum_{m' \in \mathcal G_\gamma(m) \setminus \{m\}}(d(m, m')-1)\right\}\\
    &\geq \max\left\{0, N_{\mathcal G_\gamma(m)}^k(t) + |\mathcal G_\gamma(m)|(1-\gamma)\right\}
\end{align}

Hence, $N_m^k(t) \geq N^{\mathcal C}_k(t) - (M - 1)(1-\gamma)$ for all $t$. Therefore, if we set $\eta^k_\mathcal C = \ceil*{\frac{4\sigma^2\ln T}{(\Delta_k - 2\sigma\sqrt\epsilon)^2} + (M - 1)(\gamma - 1)}$, we know that event $(C)$ will not occur. Additionally, using the union bound over $N^*_m(t)$ and $N^k_m(t)$, and Theorem~\ref{thm:robust_confidence_bound}, we have:
\begin{align}
    \mathbb P(\text{Event (A) or (B) occurs}) \leq 2\sum_{s=1}^t \frac{1}{s^4} \leq \frac{2}{t^3}.
\end{align}
Combining all probabilities, and inserting in Equation~(\ref{eqn:regret_bare}), we have,
\begin{align}
    R_{\mathcal C}(T)&\leq \sum_{k=1}^K \Delta_k \left(\eta_{\mathcal C}^k + \sum_{m \in \mathcal C} \sum_{t=T^k_\mathcal C}^T \mathbb P\left(A_{m, t} = k, N_k^\mathcal C(t) \geq \eta^k_\mathcal C \right)\right)\\
    &\leq \sum_{k=1}^K \Delta_k \left(\ceil*{\frac{4\sigma^2\ln T}{(\Delta_k - 2\sigma\sqrt\epsilon)^2} + (M - 1)(\gamma - 1)} + \sum_{m \in \mathcal C} \sum_{t=1}^T \frac{2}{t^3}\right) \\
    &\leq \sum_{k=1}^K \Delta_k \left(\ceil*{\frac{4\sigma^2\ln T}{(\Delta_k - 2\sigma\sqrt\epsilon)^2} + (M - 1)(\gamma - 1)} + 4|\mathcal C|\right) \\
    &\leq \sum_{k=1}^K \Delta_k \left(\frac{4\sigma^2\ln T}{(\Delta_k - 2\sigma\sqrt\epsilon)^2} + (M - 1)(\gamma - 1) + 1 + 4|\mathcal C|\right).
\end{align}
We can now substitute this result in the total regret.
\begin{align}
    R_\mathcal G(T) &\leq \sum_{\mathcal C \in \bf{C}_\gamma} R_\mathcal C(T) \\
    &\leq \sum_{\mathcal C \in \bf{C}_\gamma}\sum_{k=1}^K \Delta_k \left(\frac{4\sigma^2\ln T}{(\Delta_k - 2\sigma\sqrt\epsilon)^2} + (M - 1)(\gamma - 1) + 1 + 4|\mathcal C|\right).
\end{align}
Choosing $\bf{C}_\gamma$ as the minimum clique covering gets us the result.
\end{proof}
\end{document}